\documentclass[]{fairmeta}

\usepackage{amsmath}
\usepackage{amssymb}
\usepackage{mathtools}
\usepackage{amsthm}
\usepackage{amsfonts}
\usepackage{algorithm}
\usepackage{algorithmic}
\usepackage{relsize}
\usepackage{nicefrac}
\usepackage{colortbl}
\usepackage{makecell}
\usepackage{longtable}
\usepackage{array}
\usepackage{tabularx}
\usepackage{etoc}
\usepackage[dvipsnames]{xcolor}
\usepackage{wrapfig}

\newcommand{\mask}{[\mathbf{M}]}
\newcommand{\ours}{{dTRPO}}
\newcommand{\oursfull}{{Trajectory Reduction Policy Optimization}}

\newcommand{\abovebaseline}[1]{\textcolor{ForestGreen}{\tiny\,(+#1)}}
\newcommand{\belowbaseline}[1]{\textcolor{BrickRed}{\tiny\,(-#1)}}
\newcommand{\neutralbaseline}{\textcolor{Gray}{\tiny\,(0.0)}}
\newcommand{\midsize}{\fontsize{7.5pt}{9pt}\selectfont}

\usepackage{listings}
\usepackage{etoc}
\theoremstyle{plain}
\newtheorem{theorem}{Theorem}[section]
\newtheorem{proposition}[theorem]{Proposition}

\theoremstyle{definition}

\theoremstyle{remark}

\title{\ours: Trajectory Reduction in Policy Optimization of Diffusion Large Language Models}

\author[1,2,*]{Wenxuan Zhang}
\author[1]{Lemeng Wu}
\author[1]{Changsheng Zhao}
\author[1]{Ernie Chang}
\author[2]{Mingchen Zhuge}
\author[1]{Zechun Liu}
\author[1]{Andy Su}
\author[1]{Hanxian Huang}
\author[1]{Jun Chen}
\author[1]{Chong Zhou}
\author[1]{Raghuraman Krishnamoorthi}
\author[1]{Vikas Chandra}
\author[2,\dagger]{Mohamed Elhoseiny}
\author[1,\dagger]{Wei Wen}

\affiliation[1]{Meta AI}
\affiliation[2]{KAUST}

\contribution[*]{Work done at Meta}
\contribution[\dagger]{Joint last author}

\abstract{

Diffusion Large Language Models (dLLMs) introduce a new paradigm for language generation, which in turn presents new challenges for aligning them with human preferences. In this work, we aim to improve the policy optimization for dLLMs by reducing the cost of the trajectory probability calculation, thereby enabling scaled-up offline policy training.
We prove that: (i) under reference policy regularization, the probability ratio of the newly unmasked tokens is an unbiased estimate of that of intermediate diffusion states, and (ii) the probability of the full trajectory can be effectively estimated with a single forward pass of a re-masked final state. By integrating these two trajectory reduction strategies into a policy optimization objective, we propose \oursfull{} (\ours{}).
We evaluate \ours{} on 7B dLLMs across instruction-following and reasoning benchmarks. Results show that it substantially improves the core performance of state-of-the-art dLLMs, achieving gains of up to 9.6\% on STEM tasks, up to 4.3\% on coding tasks, and up to 3.0\% on instruction-following tasks. Moreover, \ours{} exhibits strong training efficiency due to its offline, single-forward nature, and achieves improved generation efficiency through high-quality outputs.

}

\date{\today}
\correspondence{Wei Wen at \email{wewen@meta.com}}

\metadata[Code]{\url{https://github.com/facebookresearch/dllm_post_training}}
\metadata[Project Page]{\url{https://wx-zhang.github.io/dtrpo-web/}}

\begin{document}

\let\cite\citep

\maketitle

\section{Introduction}

\begin{wrapfigure}[19]{r}{0.5\textwidth}
    \centering
    \vspace{-5mm}
    \includegraphics[width=\linewidth]{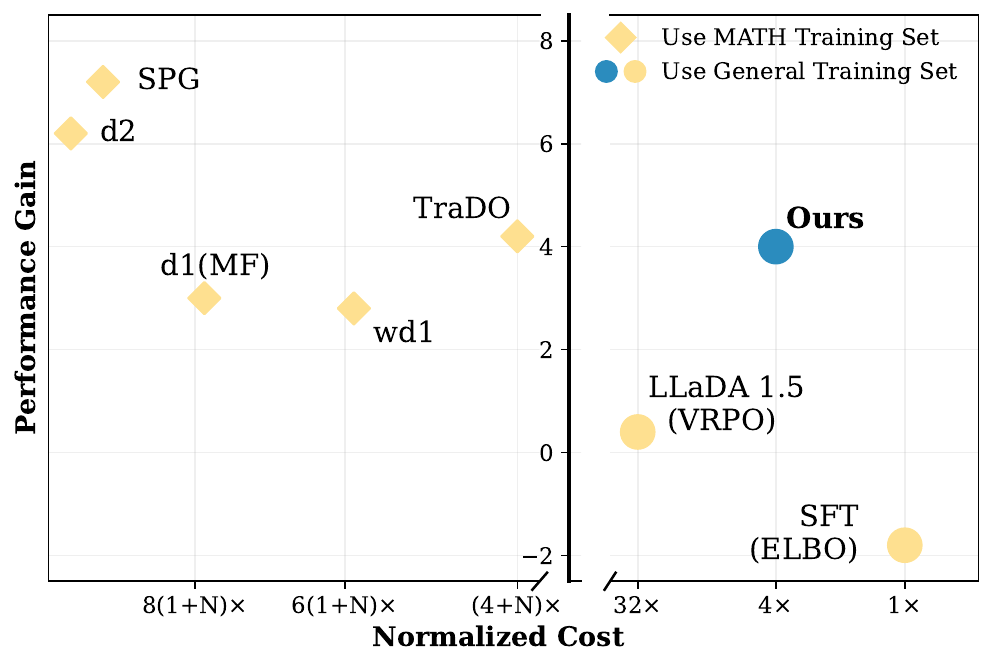}
    \caption{Performance gains on  MATH dataset v.s. normalized online (left) and  offline (right) training cost. Online training requires hundrads more  ($N\times$) computation for the rollout stage, while our offline method requires only 4 forward passes per training example and achieves comparable performance.}
    \label{fig:teaser}
\end{wrapfigure}
Diffusion Large Language Models (dLLMs) have recently emerged as a promising language generation paradigm alongside Autoregressive Large Language Models (ARMs). Inspired by discrete diffusion processes in computer vision~\cite{songdenoising, austin2021structured}, dLLMs formulate text generation as a discrete denoising process over a pre-defined sequence of masked tokens. This paradigm enables several distinctive capabilities, including bidirectional context awareness for tasks such as poem writing~\cite{llada}, controllable generation through in-place prompting~\cite{jin2025thinking},  parallel decoding for tasks such as code generation~\cite{dream-coder}, and so on.

Despite these appealing characteristics, current dLLMs still lag behind state-of-the-art ARMs~\cite{liu2025tidar}. Years of empirical refinement have resulted in a relatively standardized post-training pipeline for ARMs~\cite{smollm3, olmo3}: supervised fine-tuning (SFT) as a cold start~\cite{deepseekr1}, followed by Direct Preference Optimization (DPO)~\cite{dpo} and  online reinforcement learning~\cite{rlhfanthropic} to bridge the gap between pre-training objectives and generation. Within this pipeline, the DPO stage plays an important role as an efficient and scalable intermediate step, equipping models with strong zero-shot instruction-following ability and providing a robust initialization for subsequent reinforcement learning.

Transferring this post-training paradigm to dLLMs, however, is far from straightforward. A growing body of work explores adapting post-trained ARMs to dLLMs~\cite{dream, fastdllmv2, sbd, sdar}, as illustrated in \cref{fig:pipeline}. Nevertheless, fundamental challenges arise in further post-training dLLMs, where the target is to improve the probability of good generation. Unlike ARMs, whose probability of generation processes can be naturally factorized at the token level, dLLMs generate text through a multi-step diffusion process over partially masked states, and estimating the probability  becomes  more complex. In particular, naively measuring probability of each hidden states typically requires a large number of forward passes, leading to prohibitive training costs. While recent work~\cite{trado,d1} has explored  this problem in the online reinforcement learning stage, this inefficiency, together with the reliance on online training,  limits the scalability of dLLMs training.

In this work, we explore how to use less computation to estimate the  generation probability in the dLLM policy optimization, which we denote as Trajectory Reduction. We develop a theoretical formulation of the dLLM generation process and perform the trajectory reduction in two ways. 
Firstly, under reference-policy regularization, we prove that each  state probability ratio admits a  factorization into probability ratios over newly unmasked tokens. We then show that  the  trajectory probabilities can be estimated using fewer states by a single forward pass with block attention, and the estimation can be guided by the inference-time decoding strategy. Applying these reductions to the DPO objective yields \oursfull{} (\ours{}), which enables an efficient and stable post-training procedure for diffusion language models.

We evaluate \ours{} on 7B dLLMs across instruction-following, STEM, and coding benchmarks in the zero-shot setting. Our results show that \ours{} substantially improves both instruction-following performance and reasoning capabilities, achieving gains of up to 9.6\% on STEM tasks, up to 4.3\% on coding tasks, and up to 3.0\% on instruction-following tasks. We further analyze the effects of different implementation choices, including scheduler design, trajectory sampling strategies, and algorithm hyperparameters, and demonstrate that the proposed approach is compatible with both long-block dLLMs and block-wise dLLMs. Our contributions are
\begin{itemize}
    \item We provide a theoretical formulation of the generation process of dLLMs and  introduce \ours, which factorizes the required transition probability ratios into token-wise probability ratios and reduces the estimation cost to a single forward pass.
    \item We evaluate the proposed method on 7B dLLMs, demonstrating consistent gains in instruction-following performance of up to 10\%.
\end{itemize}

\section{Related Works}

\subsection{Large Language Model Alignment}
Bridging the gap between pre-training objectives and generation process is a critical step in the development of large language models (LLMs), and has been extensively studied in the autoregressive model (ARM) setting. Early alignment approaches relied on Supervised Fine-Tuning (SFT) and Reinforcement Learning from Human Feedback (RLHF)~\cite{rlhfopenai, rlhfanthropic}. Subsequently, Direct Preference Optimization (DPO) was introduced to convert the computationally expensive RLHF objective into a more stable and efficient supervised learning formulation~\cite{dpo}. More recently, driven by the success of models such as DeepSeek-R1 and improvements in hardware efficiency, online reinforcement learning has emerged as a popular paradigm for further enhancing reasoning capabilities~\cite{deepseekmath, deepseekr1}.

Contemporary  post-training pipelines typically follow a three-stage curriculum: an initial SFT ``cold start'', followed by high-quality SFT or DPO as an intermediate alignment stage, and finally online reinforcement learning to further refine performance~\cite{smollm3, tulu3}. This curriculum aims to produce a strong \textit{generic} model that generalizes across diverse downstream tasks.


\subsection{Diffusion Language Models}

Diffusion Large Language Models (dLLMs) have attracted growing attention, with research spanning pre-training~\cite{llada, dream, dream-coder, seeddiffusion, fastdllmv2, sbd, sdar}, post-training~\cite{llada1_5, d1, trado, d2, dultra, wd1, spg}, and sampling strategies~\cite{dpad, fastdllm, huang2025reinforcing, jin2025thinking, kim2025train, li2025fixedtrainingfreevariablelengthdenoising}. Existing training paradigms generally fall into two categories: \emph{long-block} diffusion models trained from scratch~\cite{arriola2025block, llada}, and \emph{block-wise} approaches that adapt pre-trained autoregressive checkpoints to diffusion-style generation via masked prediction~\cite{fastdllmv2, sbd, sdar}. Recent work converged toward the latter paradigm, as it achieves stronger performance with  fewer training tokens by leveraging pre-trained knowledge.

In the post-training regime, Zhu et al.~\cite{llada1_5} adapted DPO to dLLMs to perform supervised preference optimization. However, the majority of  work has focused on policy-gradient-based optimization~\cite{policygradient} to improve reasoning quality on task-specific benchmarks~\cite{d1, d2, trado, wd1, diffpo, dultra, dcolt}. A central technical bottleneck underlying these approaches is the estimation of  probability of generation process. In ARMs, this probability naturally factorizes via causal conditioning and can be computed with a single forward pass. In contrast, dLLMs generate text through a multi-step diffusion process over partially masked states; computing exact probability of generation process typically requires expensive forward passes over  intermediate states.

To address this challenge, LLaDA~\cite{llada, llada1_5} employs ELBO-based estimators with Monte Carlo sampling. d1~\cite{d1} adopts a mean-field approximation, while TRaDO~\cite{trado} and d2~\cite{d2} reduce variance by aggregating predictions over a subset of intermediate diffusion steps. DiffPO~\cite{diffpo} further introduces importance sampling by treating a reduced diffusion process as a surrogate policy.
Despite these efforts, we argue that the theoretical formulation of discrete diffusion processes~\cite{shi2024simplified, ou2025your} is sufficient to support principled, rather than heuristic, derivations of efficient trajectory probability estimation. Building on this theoretical foundation, we present a simple and efficient approach and demonstrate its effectiveness in the following sections.

\section{Algorithm}

\begin{figure*}[t]
    \includegraphics[width=\linewidth]{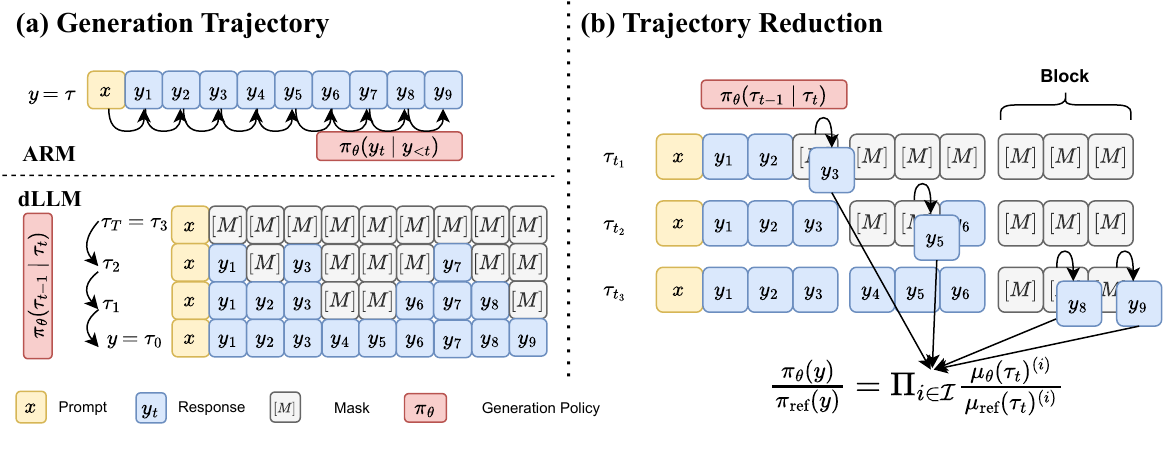}
    \caption{\textbf{(a):} Generation processes in ARMs and dLLMs. ARMs generate tokens via causal conditioning, whereas dLLMs generate sequences via a multi-step diffusion process. \textbf{(b):} \ours{} samples masked tokens for each block and estimates trajectory probability ratios using only the probabilities of newly unmasked tokens under $\pi_\theta$.}
    \label{fig:mdp}
\end{figure*}

In this section, we first review preliminaries on dLLMs and policy optimization for ARMs in \cref{sec:preliminaries}. We then formulate the reverse diffusion process of dLLMs as a finite-horizon Markov Decision Process (MDP) in \cref{sec:diffu_def}, and introduce trajectory reduction in \cref{sec:trajectory_reduction}.

\textbf{Notation.}
Let $\mathcal{V}$ be a finite vocabulary, and let $\mask$ denote a special absorbing \emph{mask} token. We define the extended alphabet as $\bar{\mathcal{V}}=\mathcal{V}\cup\{\mask\}$.
Let $\bm{y} \in \mathcal{V}^L$ denote an output token sequence of length $L$.
The diffusion process operates on a sequence of latent states $\bm{\tau}_t \in \bar{\mathcal{V}}^L$, indexed by discrete time steps $t \in \{0,1,\dots,T\}$.
We set $\bm{\tau}_0=\bm{y}$ as the clean (fully unmasked) state and $\bm{\tau}_T=[\mask,\dots,\mask]$ as the pure-noise (fully masked) state.
Throughout this section, all distributions and policies are implicitly conditioned on the prompt/context; for simplicity, we omit the prompt from the notation.
Superscripts denote sequence positions, i.e., $\tau_t^{(i)}$ is the token at position $i$ at diffusion step $t$.

Many dLLMs perform diffusion block-wise, treating previously generated blocks as context. We denote $N_B$ as the number of blocks and $T_B$ as the number of steps in each block. We have $T = N_B \times T_B$. We write a state at the $s$-th block and $t$-th step as $\bm{\tau}_{s,t}$, which equals the state at  global time step $sT_B+t$. In this state, all tokens before $s$-th block are unmasked and the tokens  after $s$-th block are masked.

\subsection{Preliminaries}\label{sec:preliminaries}

For  this subsection, we consider a single coordinate $\tau_t^{(i)}\in\bar{\mathcal{V}}$ and a single block for simplicity. Assuming independence across positions, the derivations extend to the full sequence.

\paragraph{Diffusion Large Language Models.} 
We follow the notation of~\cite{shi2024simplified} to introduce the diffusion process of dLLMs, which includes forward process and reverse process. 
The forward process progressively corrupts the data by replacing tokens with $\mask$ under the transition kernel $q$, \[
\bm{y}=\bm{\tau}_0 \overset{q}{\longrightarrow}  \dots \overset{q}{\longrightarrow} \bm{\tau}_T.
\]
Our goal is to learn a reverse policy that approximates the true posterior at each step:
\[
\bm{\tau}_t \overset{\pi_\theta}{\longrightarrow} \bm{\tau}_{t-1}.
\]
Then learned reverse policy can be used for the language generation through gradually predicting multiple tokens,  as illustrated in \cref{fig:mdp}(a).

\paragraph{Forward Process.}
The transition kernel $q(\tau_t^{(i)} \mid \tau_{t-1}^{(i)})$ in the forward process is
\begin{equation}
    q(\tau_t^{(i)} \mid \tau_{t-1}^{(i)})
    =
    (1-\beta_t)\,\mathbf{1}_{\{\tau_t^{(i)}=\tau_{t-1}^{(i)}\}}
    + \beta_t\,\mathbf{1}_{\{\tau_t^{(i)}=\mask\}},
    \label{eq:forward-kernel}
\end{equation}
where $\beta_t\in[0,1]$ is the masking schedule. This implies that, in the forward process, once a token is masked, it remains masked. If a token is unmasked at time $t-1$, then with probability $\beta_t$ it becomes masked at time $t$.
The marginal distribution at time $t$ given the clean data $\bm{\tau}_0$ is
\begin{equation}\label{eq:forward-marginal}
    q(\tau_t^{(i)} \mid \tau_0^{(i)})
    =
    \alpha_t \mathbf{1}_{\{\tau_t^{(i)} = \tau_0^{(i)}\}}
    + (1-\alpha_t) \mathbf{1}_{\{\tau_t^{(i)} = \mask\}},
\end{equation}
where $\alpha_t=\prod_{s=1}^t (1-\beta_s)$ is the cumulative retention rate. 

\paragraph{Reverse Process.}
The reverse process learns to invert this corruption. Theoretically, the exact posterior $q(\tau_{t-1}^{(i)} \mid \tau_t^{(i)}, \tau_0^{(i)})$ is :
\begin{equation}
    \begin{aligned}
        q(\tau_{t-1}^{(i)} \mid \tau_t^{(i)}, \tau_0^{(i)})
        =
        \begin{cases}
            1 & \text{if } \tau_{t-1}^{(i)}=\tau_t^{(i)}=\tau_0^{(i)},\\
            \frac{\alpha_{t-1}-\alpha_t}{1-\alpha_t} & \text{if } \tau_t^{(i)}=\mask,\ \tau_{t-1}^{(i)}=\tau_0^{(i)},\\
            \frac{1-\alpha_{t-1}}{1-\alpha_t} & \text{if } \tau_t^{(i)}=\mask,\ \tau_{t-1}^{(i)}=\mask,\\
            0 & \text{otherwise.}
        \end{cases}
    \end{aligned}
    \label{eq:exact-posterior}
\end{equation}
That is, conditioned on $\tau_t^{(i)}=\mask$, with probability $\frac{1-\alpha_{t-1}}{1-\alpha_t}$ the token stays masked, and with probability $\frac{\alpha_{t-1}-\alpha_t}{1-\alpha_t}$ it is unmasked; once unmasked, it remains unmasked thereafter.

To approximate the posterior, we parameterize a neural network $f_\theta(\bm{\tau}_t)$ that predicts the clean-data distribution $\bm{y}=\bm{\tau}_0$ given the current noisy state $\bm{\tau}_t$ and a conditioning context (e.g., a prompt). Let $\mu_\theta(\cdot \mid \bm{\tau}_t)=\text{softmax}(f_\theta(\bm{\tau}_t))$ denote the predicted categorical distribution over $\mathcal{V}$ at each position.
The parameterized reverse transition kernel $p_\theta(\bm{\tau}_{t-1} \mid \bm{\tau}_t)$ is typically constructed by marginalizing over the predicted $\hat{\bm{\tau}}_0$~\cite{shi2024simplified}:
\begin{equation} 
        p_\theta(\tau_{t-1}^{(i)} \mid \tau_t^{(i)}, \bm{\tau}_t, \mu_\theta) = 
    \begin{cases}
      \frac{\alpha_{t-1} - \alpha_t}{1 - \alpha_t}\mu_\theta(\tau_{t-1}^{(i)} \mid \bm{\tau}_t) & \tau_t^{(i)} = \mask, \hat{\tau}_{0}^{(i)}= \tau_{t-1}^{(i)} \neq \mask \\
      \frac{1-\alpha_{t-1}}{1 - \alpha_t}
      & \tau_t^{(i)} = \mask, \tau_{t-1}^{(i)} = \mask \\
      1
      & \tau_t^{(i)} \neq \mask, \tau_{t-1}^{(i)} = \tau_t^{(i)} \\
      0
      & \text{otherwise}
    \end{cases}
    \label{eq:reverse}
\end{equation}

\paragraph{Alignment with Policy Optimization in ARMs. }
In the policy optimization based method, the goal of alignment is to maximize the probability of generating the completion $\bm{y}=(y_1,\dots,y_L)\in\mathcal{V}^L$.
As shown in \cref{fig:mdp}(a), given a prompt (context), an ARM defines a  policy $\pi_\theta(y_t \mid y_{<t})$ to generate the completions and induces an MDP,  whose trajectory probability factorizes as
\begin{equation}
\pi_\theta(\bm{y})
=
\prod_{t=1}^{L}\pi_\theta(y_t\mid \bm{y}_{<t}),
\label{eq:arm-factorization-detailed}
\end{equation}
where $\bm{y}_{<t}=(y_1,\dots,y_{t-1})$. Thanks to the casual attention, $\pi_\theta(y_t\mid \bm{y}_{<t})$ can be simply computed by forwarding the final sequence $\bm{y}$ and taking the probability of the token $y_t$, that is, $\pi_\theta(y_t\mid \bm{y}_{<t}) =  [\pi_\theta{(\bm{y})}]^{(t)}$. Thus, the trajectory probability $\pi_\theta(\bm{y})$ can be computed in a single forward pass.

A class of offline alignment methods~\cite{dpo} optimizes the policy using preference data $\mathcal{D}=\{(\bm{y}^+,\bm{y}^-)\}$, where $\bm{y}^+$ is preferred over $\bm{y}^-$ under the same prompt. The objective is
\begin{equation}
\mathcal{L}(\theta)
=
-\mathbb{E}_{(\bm{y}^+,\bm{y}^-)\sim \mathcal{D}}
g \left(
\lambda \left[
\log \frac{\pi_\theta(\bm{y}^+)}{\pi_\theta(\bm{y}^-)}
-
\log \frac{\pi_{\mathrm{ref}}(\bm{y}^+)}{\pi_{\mathrm{ref}}(\bm{y}^-)}
\right]
\right),
\label{eq:dpo-detailed}
\end{equation}
where $g$ is a projection function (e.g., log-sigmoid in DPO~\cite{dpo} and ReLU in RSO~\cite{rso}), and $\lambda$ controls the KL scale.

Next, we show that dLLMs do not admit the factorization as \cref{eq:arm-factorization-detailed}; consequently, the quantities needed for policy optimization must be computed over diffusion trajectories.

\subsection{Formulation of dLLMs as MDP}\label{sec:diffu_def}

Following the prior work in the field of vision diffusion models~\cite{black2024training}, we cast reverse diffusion as a finite-horizon MDP. The state at reverse step $t$ is
$
s_t := (\bm{\tau}_t, t),
$
where $\bm{\tau}_t\in\bar{\mathcal{V}}^{L}$ is the partially masked sequence and $t\in\{T,T-1,\dots,1\}$ is the remaining diffusion time.
The initial state is deterministic with $\bm{\tau}_T=[\mask,\dots,\mask]$.
An action corresponds to selecting the next denoised sequence,
$
a_t := \bm{\tau}_{t-1},
$
sampled from a parameterized reverse policy $\pi_\theta$:
\begin{equation}
\pi_\theta(a_t \mid s_t)
=
\pi_\theta(\bm{\tau}_{t-1}\mid \bm{\tau}_t, t).
\label{eq:dllm-policy}
\end{equation}
Under the reverse process in \cref{sec:preliminaries}, $\pi_\theta$ is instantiated by the categorical distribution induced by \cref{eq:reverse}.
The environment transition is deterministic: $\bm{\tau}_{t-1} \leftarrow a_t$.
This MDP has a fixed horizon of length $T$ and terminates at $t=0$, yielding a completed output sequence $\bm{\tau}_0=\bm{y}$.

Given a prompt, generation proceeds by sampling a reverse diffusion trajectory,
$
\bm{\tau} := (\bm{\tau}_T,\bm{\tau}_{T-1},\dots,\bm{\tau}_0),
$
where $\bm{\tau}_T=[\mask,\dots,\mask]$ and $\bm{\tau}_0=\bm{y}\in\mathcal{V}^L$.
Under $\pi_\theta$, the \emph{trajectory probability} is
\begin{equation}
    \pi_\theta(\bm{\tau})
    =
    \prod_{t=1}^T \pi_\theta(\bm{\tau}_{t-1}\mid \bm{\tau}_t, t)
    =
    \prod_{t=1}^T p_\theta(\bm{\tau}_{t-1}\mid \bm{\tau}_t, t, \mu_\theta).
    \label{eq:marginal-policy}
\end{equation}
Since $\bm{\tau}_T$ is always fully masked, the initial-state term is constant and omitted.

\subsection{Trajectory Reduction}\label{sec:trajectory_reduction}
In contrast to ARMs, the trajectory probability in \cref{eq:marginal-policy} does not admit a simple token-level factorization as in \cref{eq:arm-factorization-detailed}. For example, in \cref{fig:mdp}(a), the generation of token $y_2$ depends on a partially masked state (e.g., $\bm{\tau}_2$). If we perform only a single forward pass on the fully unmasked output $\bm{y}$ (as in ARMs), the resulting conditional distribution corresponds to $\pi_\theta(\tau_1^{(2)} \mid \bm{\tau}_0)$ rather than the required transition probability $\pi_\theta(\tau_1^{(2)} \mid \bm{\tau}_2)$. As a result, trajectory probability cannot be  obtained from a single forward pass. This yields the first challenge: (i) efficiently estimating trajectory probability with a small number of forward passes.

Beyond efficiency, computing the  trajectory probability in \cref{eq:marginal-policy} using the kernel in \cref{eq:reverse} can be numerically unstable. The schedule-dependent coefficients (involving $\alpha_t$) can differ  in magnitude from the learned categorical terms $\mu_\theta(\cdot)$; with a large vocabulary, these coefficients may dominate and lead to unstable training. This yields the second challenge: (ii) handling schedule-dependent coefficients in $p_\theta$ to ensure stable training.

\begin{wraptable}{r}{0.5\textwidth}
    \centering
    \caption{ Directly optimizing masked token probability  is not  effective in dLLM post training. }
    \phantomsection\label{tab:sftperplexity}\midsize
    \begin{tabular}{lccc}
        \toprule
        Objective& GSM8K & LCBv6 & IFEval \\
        \midrule
        $\max_\theta \Pi_{y_i = \mask} \pi_\theta (\tilde{\bm{y}})^{(i)}$  & 66.26 & 2.37 & 25.69 \\
        $\min_\theta $ ELBO($\Pi_{y_i = \mask} \pi_\theta (\tilde{\bm{y}})^{(i)}$) & 79.98 & 11.56 & 51.02 \\
        \ours{} & 85.97& 15.17& 65.06 \\
        \bottomrule
    \end{tabular}
\end{wraptable}
Some existing work~\cite{d1} estimates masked-token probabilities by randomly masking the final output sequence and directly optimizing the masked token probability. However, under the diffusion formulation in \cref{sec:diffu_def}, this is not a grounded estimator of the trajectory probability in \cref{eq:marginal-policy}. When used for training, other works~\cite{ou2025your} suggest it should be paired with an ELBO objective, and we show the performance in \cref{tab:sftperplexity}. In practice, ELBO training can be noisy and often requires multiple corrupted inputs $\tilde{\bm{y}}$ to obtain a low-variance estimate, which is still sub-optimal.

\paragraph{State Reduction.}  
To address challenge (i), we estimate the trajectory probability in \cref{eq:marginal-policy} with a subset of timesteps. With block attention~\cite{fastdllmv2,sbd}, the computation can be implemented in a single forward pass by sampling one time step $t$ within each block, details in \cref{sec:attention-mask-for-block-attention}.

\begin{theorem}[State Reduction]\label{thm:loss-formulation}
    The probability of the MDP process in dLLMs can be reduced to 
    \begin{equation}\label{eq:loss-formulation}
        \log \pi_\theta(\bm{\tau}) = \sum_{s=1}^{N_B} \mathbb{E}_{t\sim U(1,T_B)} T_B \log \pi_\theta(\bm{\tau}_{s,t-1} \mid \bm{\tau}_{s,t},t)
        \end{equation}
        where $\bm{\tau}_{s,t}=\bm{\tau}_{sT_B+t}$ denotes the state at block $s$ and within-block step $t$.
\end{theorem}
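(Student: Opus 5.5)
The plan is to start from the Markov factorization of the trajectory probability in \cref{eq:marginal-policy} and regroup it by blocks. Taking logarithms gives
\[
\log \pi_\theta(\bm{\tau}) = \sum_{t=1}^{T} \log \pi_\theta(\bm{\tau}_{t-1}\mid \bm{\tau}_t, t).
\]
Since $T = N_B \times T_B$ and the global step $sT_B+t$ is identified with the block-local index $(s,t)$, this sum can be reindexed as a double sum. The outer sum runs over blocks $s=1,\dots,N_B$ and the inner sum over within-block steps $t=1,\dots,T_B$.

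Two points need care in this reindexing. First, the block offset convention: the paper writes $\bm{\tau}_{s,t}=\bm{\tau}_{sT_B+t}$ with $s$ starting at $1$, so I would make the indexing consistent, for example by reading it as $(s-1)T_B+t$. Second, every reverse transition must belong to exactly one block, with no transition straddling two blocks. This holds under block-wise decoding. Each global step denoises only tokens of the current block, with earlier blocks unmasked context and later blocks fully masked. The last within-block transition $\bm{\tau}_{s,1}\to\bm{\tau}_{s,0}$ finishes block $s$, and $\bm{\tau}_{s,0}$ coincides with the starting state of the next block.

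With the regrouping in hand, the key step is the elementary identity for a finite sum. For any $a_1,\dots,a_{T_B}$,
\[
\sum_{t=1}^{T_B} a_t = T_B\,\mathbb{E}_{t\sim U(1,T_B)}[a_t],
\]
where $U(1,T_B)$ is the discrete uniform distribution on $\{1,\dots,T_B\}$. Applying this inside each block with $a_t=\log\pi_\theta(\bm{\tau}_{s,t-1}\mid\bm{\tau}_{s,t},t)$ gives \cref{eq:loss-formulation} exactly.

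I would then remark on what the identity buys in practice. Drawing one $t$ per block gives an unbiased single-sample estimator of each block's contribution. Because the blocks are independent draws, the whole sum over $s$ is also unbiased. With block attention, all $N_B$ sampled states can be evaluated in one forward pass, as described in \cref{sec:attention-mask-for-block-attention}. I would also note that each conditional $\pi_\theta(\bm{\tau}_{s,t-1}\mid\bm{\tau}_{s,t})$ depends only on the state within block $s$ plus the clean prefix. That is exactly what lets the block-attention mask compute the sampled terms of different blocks in parallel without interference.

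The main obstacle is not the algebra, which is a reindexing plus a uniform-expectation identity. It is justifying the block decomposition: that the global trajectory really decomposes into $N_B$ disjoint segments of exactly $T_B$ transitions each, with consistent boundary states. I would handle this by stating the block-wise generation assumption explicitly, namely that only block $s$ changes during global steps $(s-1)T_B+1,\dots,sT_B$. From that assumption, the factorization of \cref{eq:marginal-policy} splits cleanly into per-block products.
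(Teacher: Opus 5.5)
Your proposal is correct and follows the paper's proof: take logs of the Markov factorization of the trajectory probability, regroup the $T=N_BT_B$ transition terms into a double sum over blocks and within-block steps, and apply $\sum_{t=1}^{T_B} f(t)=T_B\,\mathbb{E}_{t\sim U(1,T_B)}[f(t)]$ to the inner sum. Your change of the offset to $(s-1)T_B+t$ is a legitimate fix, since the paper's $sT_B+t$ runs up to $T+T_B$; note also that the equality itself needs only the partition of $\{1,\dots,T\}$ into consecutive length-$T_B$ intervals, so the block-wise generation assumption you call the main obstacle matters for interpreting the block-local states and for block attention, not for the identity.
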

The proof is provided in \cref{sec:proof-state-reduction}.
In block-wise diffusion models such as Fast-dLLM v2, $N_B$ is determined by the sequence length and block size; in long-block models such as LLaDA, one  has $N_B=1$.  Globally, this estimator is unbiased and equals to using $N_B$ sampled diffusion steps.

\paragraph{Ratio Reduction.}  
Although computing trajectory probability itself is difficult, policy optimization requires only ratios between the current policy and a reference policy. Crucially, when the two policies are evaluated on the same transition, the schedule-dependent coefficients cancel, yielding an expression depending only on the model-predicted categorical probabilities.

\begin{theorem}[Ratio Reduction]
    \label{thm:ratio-reduced}
    Assume the reverse kernel factorizes across coordinates as in \cref{eq:reverse}. Then for any $(\bm{\tau}_{t-1},\bm{\tau}_t)$,
    \begin{equation}
    \frac{\pi_\theta(\bm{\tau}_{t-1}\mid \bm{\tau}_t,t)}
         {\pi_{\mathrm{ref}}(\bm{\tau}_{t-1}\mid \bm{\tau}_t,t)}
    =
    \prod_{i\in \mathcal{I}_t(\bm{\tau}_{t-1},\bm{\tau}_t)}
    \frac{\mu_\theta\!\left(\tau_{t-1}^{(i)}\mid \bm{\tau}_t\right)}
         {\mu_{\mathrm{ref}}\!\left(\tau_{t-1}^{(i)}\mid \bm{\tau}_t\right)},
    \label{eq:ratio-reduced-correct}
    \end{equation}
    where $\mathcal{I}_t(\bm{\tau}_{t-1},\bm{\tau}_t)$ is the set of newly unmasked coordinates at step $t$, defined by 
    \begin{equation}
        \mathcal{I}_t(\bm{\tau}_{t-1},\bm{\tau}_t)
        := \left\{ i\in [L] \;:\; \tau_t^{(i)}=\mask,\ \tau_{t-1}^{(i)}\in\mathcal{V}\right\}.
        \label{eq:newly-unmasked}
        \end{equation}
    \end{theorem}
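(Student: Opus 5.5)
The argument is a coordinate-wise factorization followed by a case analysis on the reverse kernel in \cref{eq:reverse}. First, I write both policies as products over positions. Under the assumption of the theorem, the kernel factorizes across coordinates, and by \cref{eq:marginal-policy} each policy equals $p_\theta$ or $p_{\mathrm{ref}}$:
\[
\pi_\theta(\bm{\tau}_{t-1}\mid\bm{\tau}_t,t)=\prod_{i=1}^L p_\theta(\tau_{t-1}^{(i)}\mid\tau_t^{(i)},\bm{\tau}_t,\mu_\theta),
\]
and analogously for $\pi_{\mathrm{ref}}$ with $\mu_{\mathrm{ref}}$. Both policies use the same schedule $\{\alpha_t\}$ and differ only through the predicted categorical distribution $\mu$. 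The ratio is therefore a product of per-coordinate ratios. It is well defined whenever the denominator is positive, i.e.\ for transitions that are admissible under $\pi_{\mathrm{ref}}$; I will state this convention explicitly. For inadmissible transitions both sides are $0/0$, or the ratio is excluded by convention, so there is nothing to check there.

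Next I partition the coordinates $[L]$ according to the pair $(\tau_t^{(i)},\tau_{t-1}^{(i)})$, following the cases of \cref{eq:reverse}.
\begin{itemize}
\item \emph{Already unmasked} ($\tau_t^{(i)}\neq\mask$, $\tau_{t-1}^{(i)}=\tau_t^{(i)}$): both kernels equal $1$, so the factor is $1$.
\item \emph{Still masked} ($\tau_t^{(i)}=\tau_{t-1}^{(i)}=\mask$): both kernels equal $\frac{1-\alpha_{t-1}}{1-\alpha_t}$, independent of $\mu$, so the factor cancels to $1$.
\item \emph{Newly unmasked} ($i\in\mathcal{I}_t$): the kernels are $\frac{\alpha_{t-1}-\alpha_t}{1-\alpha_t}\mu_\theta(\tau_{t-1}^{(i)}\mid\bm{\tau}_t)$ and the same expression with $\mu_{\mathrm{ref}}$. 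The common schedule coefficient cancels and leaves $\mu_\theta/\mu_{\mathrm{ref}}$.
\item \emph{Any other pair}, e.g.\ an unmasked token that changes or is re-masked: both kernels are $0$, so the transition is inadmissible and was excluded above.
\end{itemize}
Multiplying the surviving factors gives exactly \cref{eq:ratio-reduced-correct}.

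The main point needing care is the cancellation in the newly-unmasked case. It requires the coefficient $\frac{\alpha_{t-1}-\alpha_t}{1-\alpha_t}$ to be nonzero, i.e.\ $\alpha_{t-1}>\alpha_t$ and $\alpha_t<1$. I will assume the schedule is strictly decreasing on the steps where unmasking occurs; otherwise such a transition has zero probability under both policies and is excluded by the admissibility convention. A second subtlety is the condition $\hat{\tau}_0^{(i)}=\tau_{t-1}^{(i)}$ in \cref{eq:reverse}. I will read it as saying that the probability of unmasking to a token $v$ is the coefficient times $\mu(v\mid\bm{\tau}_t)$, so the $\mu$ factor is evaluated at the realized token $\tau_{t-1}^{(i)}$ for both models. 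Beyond these two points, the proof is routine bookkeeping.
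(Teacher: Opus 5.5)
Your proposal is correct and follows the same route as the paper's proof: factorize both policies over coordinates, split the positions into already-unmasked, still-masked, and newly-unmasked sets, and observe that the shared schedule coefficients cancel, leaving only the $\mu_\theta/\mu_{\mathrm{ref}}$ factors on $\mathcal{I}_t$. Your explicit admissibility convention tightens a point the paper leaves implicit: it assigns factor $1$ to every position with $\tau_t^{(i)}\in\mathcal{V}$ without checking $\tau_{t-1}^{(i)}=\tau_t^{(i)}$. Strictly, though, only the left-hand side is $0/0$ for an inadmissible transition, since the right-hand side is a positive product of softmax ratios.
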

The proof is provided in \cref{sec:proof-ratio-reduction}.
We emphasize that \cref{eq:ratio-reduced-correct} holds \emph{only} for ratios: neither $\pi_\theta(\bm{\tau}_{t-1}\mid \bm{\tau}_t,t)$ nor $\pi_{\mathrm{ref}}(\bm{\tau}_{t-1}\mid \bm{\tau}_t,t)$ alone reduces to a product of categorical terms due to the schedule coefficients and the unchanged coordinates. Therefore, it is not effective to optimize the trajectory by directly maximizing the probability of masked toknes, as the first row in \cref{tab:sftperplexity}.

A practical benefit of \cref{thm:ratio-reduced} is that the ratio in \cref{eq:ratio-reduced-correct} is independent of the masking schedule $\{\beta_t\}_{t=1}^{T}$: all schedule-dependent coefficients cancel between $\pi_\theta$ and $\pi_{\mathrm{ref}}$. Therefore, we may choose a scheduler that matches the base model's inference-time decoding strategy without changing the  form of the loss; equivalently, we define $\mathcal{I}_t$ according to the inference-time unmasking strategy.

Here, the \emph{scheduler} specifies (i) the number of reverse steps $T$ and (ii) how tokens are unmasked at each step.
We adopt an inference-aligned scheduler: at each reverse step, we unmask $k=0.1\times(\text{block size})$ tokens by selecting the top-$k$ masked coordinates according to the  confidence scores. For example, in \cref{fig:mdp}(b), if $\mathcal{I}_t=\{2,4\}$, we use only these two categorical probabilities to form the trajectory ratio at that step.
Although selecting top-$k$ coordinates violates strict conditional independence, we empirically show in \cref{sec:insights} that the results are negligible impacted.

\paragraph{\ours{} Objective.} Apply \cref{thm:loss-formulation} and \cref{thm:ratio-reduced} in DPO~\cite{dpo},
 \ours{} is formulated as:
\begin{equation}
        \mathcal{L}_{\text{\ours{}}}(\theta) = - \mathbb{E}_{(\bm{y}^+,\bm{y}^-)\sim\mathcal{D}} \log \sigma   
        \Bigg( \lambda T_B \bigg[  S(\bm{y}^+_{s,t}; \theta, \text{ref})
        - S(\bm{y}^-_{s,t}; \theta, \text{ref}) \bigg] \Bigg),
\end{equation}
where the term $S(\bm{y}; \theta, \text{ref})$ is the log   of probability ratios over the newly  unmasked tokens for summed over blocks
\begin{equation}
    S(\bm{y}; \theta, \text{ref}) = \sum_{s=1}^{N_B}\mathbb{E}_{t \sim U(1, T_B)} 
    \log \bigg[  \prod_{i\in \mathcal{I}_t(\bm{\tau}_{s,t-1},\bm{\tau}_{s,t})} \frac{\mu_\theta\!\left(\tau_{s,t-1}^{(i)}\mid \bm{\tau}_{s,t}\right)}{\mu_{\mathrm{ref}}\!\left(\tau_{s,t-1}^{(i)}\mid \bm{\tau}_{s,t}\right)} \bigg].
\end{equation}
 and $\mathcal{I}_t(\bm{\tau}_{s,t-1},\bm{\tau}_{s,t})$ is the set of newly unmasked coordinates at block $s$ and within-block step $t$, selected by top-$k$ confidence during the training.
The derivation is provided in \cref{sec:proof-ours-objective}. We further provide a bias and variance analysis in \cref{sec:bias_variance_analysis}.
Algorithms can be found in \cref{alg:dtrpo}.
Moreover, our theorems also generalize to policy-gradient methods, as detailed in \cref{sec:generalization_to_pg}.

\subsection{Relation to Early Works}
Early work~\cite{d1,wd1,dultra,d2,diffpo} commonly optimizes the KL-regularized objective using policy-gradient updates under a mean-field approximation. These methods typically construct a stochastic estimator by (i) sampling and forwarding several reverse steps and randomly masking tokens, and (ii) approximating the objective using the model predictions on the masked positions. In contrast, we estimate the required ratios using $N_B$ sampled steps (one per block) with one forward pass. Moreover, we do not use the probabilities of all masked tokens; instead, we use only the probabilities of newly unmasked tokens.

The above procedure can be interpreted as a special case of our framework. For long-block models (where $N_B=1$), it corresponds to a scheduler that, at the sampled step, unmasks \emph{all} currently masked coordinates (i.e., $k$ equals the number of masked tokens at that step). Since these works employ policy-gradient objectives that depend on policy ratios, our ratio-reduction theorem also provides a principled explanation of their estimators.

In practice, such a scheduler is often suboptimal, both in optimization stability and in matching the model's deployment-time generation dynamics, which we verify empirically in our experiments.

\begin{table*}[t]
    \centering\midsize
    \caption{Performance of dLLMs under \textbf{zero-shot} evaluation. \ours{} achieves overall best performance and close the gap to strong ARMs.}
    \label{tab:placeholder}
    \begin{tabular}{l|ccc|ccc|ccc}
    \toprule
         &\makecell{\textbf{GPQA} \\ [-0.6ex]\tiny{(cot,diamond)}}& \makecell{\textbf{GSM8K}\\ [-0.6ex]\tiny{(cot)}}& \textbf{MATH}& \makecell{\textbf{LCB}\\ [-0.6ex]\tiny{(v6)}}&  \makecell{\textbf{MBPP+}\\ [-0.6ex]\tiny{(extra)}}&\makecell{\textbf{HEval+} \\[-0.6ex] \tiny(extra)}&\makecell{\textbf{IFEval} \\[-0.6ex] \tiny(prompt)}& \makecell{\textbf{ArenaHard} \\[-0.6ex] \tiny(V2.0)}& \textbf{MTBench}\\\midrule
         \multicolumn{10}{c}{\textit{dLLM from Scratch}}  \\\midrule
         LLaDA Instruct & 19.19& 78.47& 42.48& 6.07& 38.1& 34.1& 53.23 & -& -\\
         LLaDA 1.5& 19.19& 79.45& 43.64& 6.54& 37& 39& 59.52 & -& -\\\midrule
\multicolumn{10}{c}{\textit{Qwen 2.5 7B Instruct v.s. dLLM from Qwen2.5 7B Instruct}}\\\midrule
Qwen2.5 Instruct& 36.36& 87.87&73.06& 24.42 & 67.5& 74.4&71.38 &10.43 & 8.08 \\\midrule
 Dream Instruct& 28.79& 75.36& 50.22& 12.61& \textbf{54.5}& 53& 50.65& 6.79&3.88\\
  Fast-dLLM-v2& 20.71& 82.34& 60.26& 11.56& 51.6& 59.1& 62.11 & 1.26 & 3.17 \\
 Fast-dLLM-v2+ELBO& 12.63& 79.98& 58.48& 11.56& 52.4& 59.1& 51.02 & 0.17& 1.01\\
 Fast-dLLM-v2+VRPO& 24.24& 83.17& 63.32& 12.89& 50.5& 57.3& 65.06 & 7.32 & 6.37 \\
 Fast-dLLM-v2+DPO w/  MF& 23.74& 85.37&63.20 &11.00 &46.30 & 51.80& \textbf{65.62} & 6.02 & 6.48 \\
\rowcolor{lightgray!40}\rule{0pt}{2.3ex}\textbf{Fast-dLLM-v2+\ours{}}& \textbf{30.30 }& \textbf{85.97 }& \textbf{64.3}& \textbf{15.17}& 51.6& \textbf{63.4}& 65.06 & \textbf{7.41} & \textbf{6.53} \\\bottomrule
 \end{tabular}
\end{table*}
\section{Experiments}


\subsection{Setup}
\textbf{Backbone Model and Inference Strategy.}
We select Fast-dLLM-v2-7B as the backbone model due to its strong inference efficiency and native support for parallel decoding~\cite{fastdllmv2}.
Fast-dLLM-v2-7B is adapted from Qwen2.5-7B-Instruct~\cite{qwen2.5} and employs a block-wise diffusion process.
At inference time, generation proceeds block by block (with size 32) in an autoregressive manner.
We follow the official Fast-dLLM-v2 inference implementation, using greedy decoding with a batch size of 32 and a maximum generation length of 2048 tokens.

\textbf{Training Details.}
Following SmolLM3, we primarily use the SmolTalk2 preference dataset to improve instruction-following capability~\cite{smollm3}.
In addition, we mix it with a math preference dataset~\cite{argilla2024distilabel_math_preference_dpo} and a code preference dataset~\cite{vezora2024code_preference_pairs} to enhance mathematical and code reasoning performance. The total number of preference pairs is 500K.

We train the model for one epoch with a per-device batch size of 2 and a gradient accumulation factor of 8 across 64 NVIDIA A100 (80GB) GPUs; total training takes approximately 5 hours.
Optimization is performed using AdamW~\cite{adamw} with a learning rate of $5\times10^{-7}$.
We employ a cosine-annealing learning rate schedule with a warmup phase covering the first 10\% of training steps.
The context length during training is set to 4096 tokens.
For the remaining hyperparameters, we follow the training script of SmolLM3~\cite{smollm3}.

For \ours{}, we use a top-$k$ confidence scheduler with $k=0.1$.
The training block size is 32, which corresponds to selecting 3 newly unmasked tokens per block to form probability ratios.
For preference optimization, we use the DPO~\cite{dpo} objective with $\lambda=0.05$.
To reduce training cost, we follow BFPO~\cite{bfpo} and update only the MLP layers and the output projection layers, while keeping the remaining parameters frozen. More training details can be found in \cref{sec:trainingdetails}.

\textbf{Evaluation Benchmarks.}
We evaluate models on instruction-following, math and STEM reasoning, and code generation benchmarks, following the evaluation protocols of SmolLM3~\cite{smollm3} and Fast-dLLM-v2~\cite{fastdllmv2}.
For math reasoning, we report results on GPQA~\cite{gpqa}, GSM8K~\cite{gsm8k}, and MATH~\cite{math}.
For code generation, we evaluate on MBPP~\cite{mbpp}, LiveCodeBench v6 (LCBv6)~\cite{livecodebench}, and HumanEval~\cite{humaneval}.
For instruction following, we use IFEval~\cite{ifeval}, Arena-Hard v2~\cite{li2025from}, and MT-Bench~\cite{mtbench}.
All benchmarks are evaluated in the \textit{zero-shot} setting.
Detailed descriptions of datasets and evaluation metrics are provided in \cref{sec:evaluation-dataset-details}.

\textbf{Comparison Baselines.}
We compare the following 7B diffusion language models initialized from Qwen2.5-7B-Instruct using their official checkpoints: LLaDA~\cite{llada}, Dream~\cite{dream}, and Fast-dLLM-v2~\cite{fastdllmv2}.
We additionally compare \ours{} against several alternative estimators, including ELBO-based supervised fine-tuning~\cite{llada}, VRPO~\cite{llada1_5}, and DPO with a block-wise mean-field approximation (DPO w/  MF)~\cite{d1}.
All alternative estimators are trained with the same data and training configuration as \ours{}.
Detailed information  are  in \cref{sec:comparison-baseline-models}.

\subsection{Results Overview}

Evaluation results are summarized in \cref{tab:placeholder}. Across all evaluated benchmarks, \ours{} delivers the most robust overall performance among the open-source dLLM baselines. Specifically, compared to the strongest dLLM baseline, Fast-dLLM-v2, \ours{} achieves significant performance gains: $9.59\%$ on GPQA, $3.63\%$ on GSM8K, $4.04\%$ on MATH, and $3.61\%$ on LCBv6. Furthermore, it enhances code generation capabilities (HumanEval+ by $4.3\%$) and instruction-following proficiency (IFEval by $2.95\%$). For open-ended benchmarks evaluated under the LLM-as-a-judge protocol, \ours{} outperforms Fast-dLLM-v2 by $6.15\%$ on Arena-Hard and $3.36\%$ on MT-Bench.

Notably, we implemented several representative post-training strategies for comparison, and \ours{} consistently outperforms these alternatives in aggregate. While certain baselines achieve localized advantages on specific benchmarks—for instance, Dream leads on MBPP+ by $2.9\%$ and DPO w/ MF slightly edges out on IFEval by $0.56\%$—these instances likely stem from a specialized focus within their respective training data distributions. However, these methods typically fall substantially behind on other critical benchmarks; for example, Dream lags behind \ours{} by $14.08\%$ on MATH. In contrast, \ours{} maintains a balanced and superior performance profile across the entire suite.

Moreover, \ours{} effectively narrows the performance gap between dLLMs and strong ARMs, such as Qwen2.5-7B-Instruct. Since most dLLMs are adapted from ARMs, the ability to preserve the original pre-trained knowledge is paramount for maintaining complex reasoning tasks. While some performance degradation relative to the base ARM is an inherent trade-off for the significantly higher sampling efficiency of dLLMs (1.9 $\times$ in \cref{fig:ablation_training}(d)), the drop exhibited by \ours{} is relatively marginal. This is especially evident in instruction-following tasks evaluated by LLM-as-a-judge, such as Arena-Hard and MT-Bench, where the performance gaps are narrowed to a nearly negligible $3.02\%$ and $1.55\%$, respectively.
We provide more comparisons with other model backbone in \cref{sec:comparisonqwen3}

\begin{figure*}[t]
    \centering
    \includegraphics[width=0.32\textwidth]{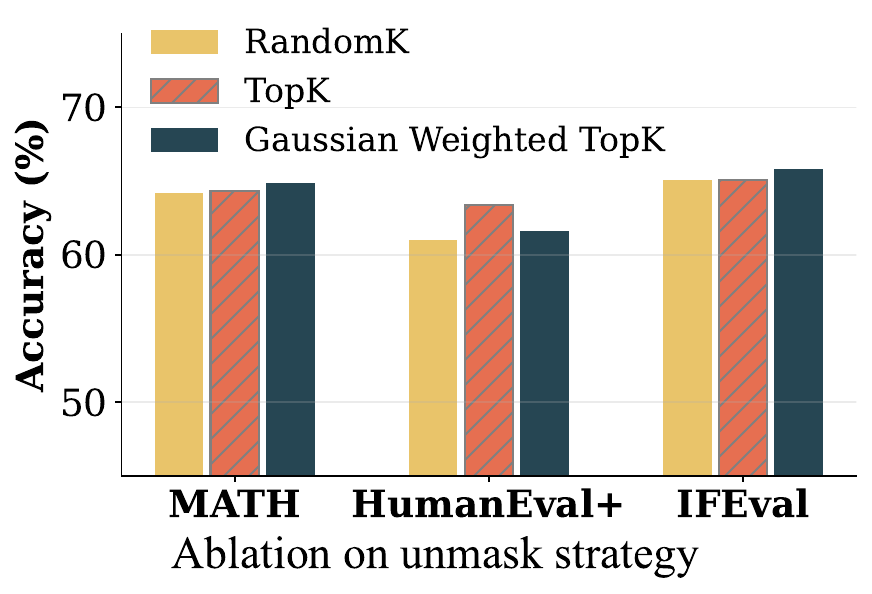}
    \includegraphics[width=0.32\textwidth]{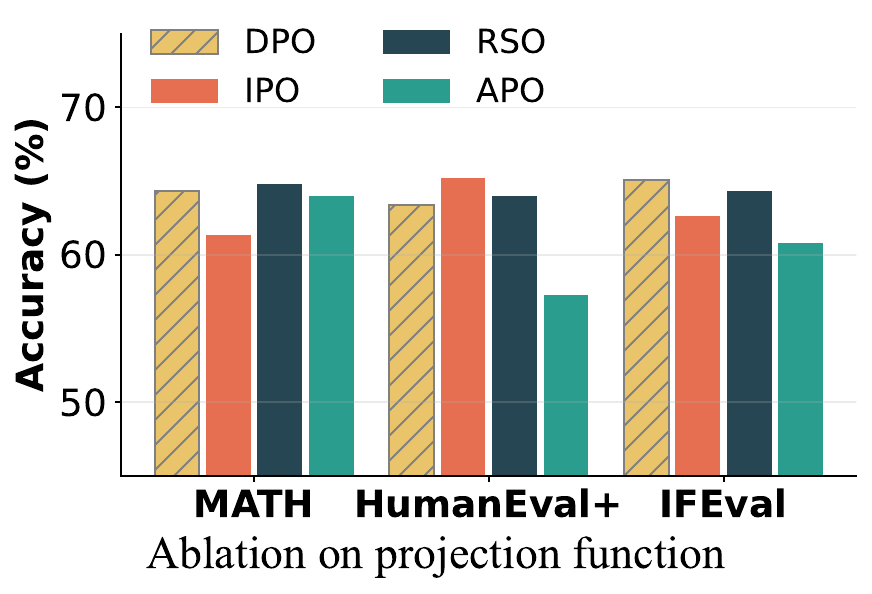}
    \includegraphics[width=0.32\textwidth]{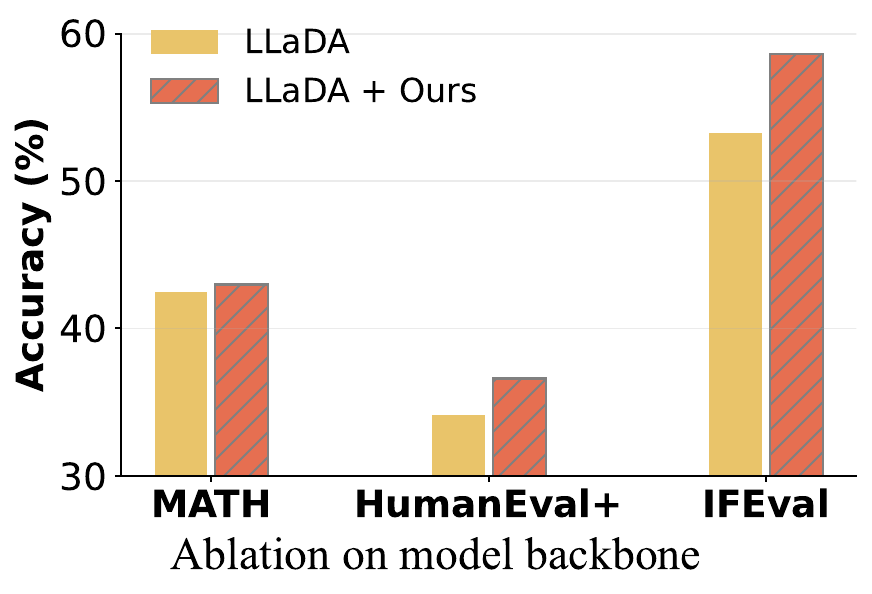}

    \includegraphics[width=0.32\textwidth]{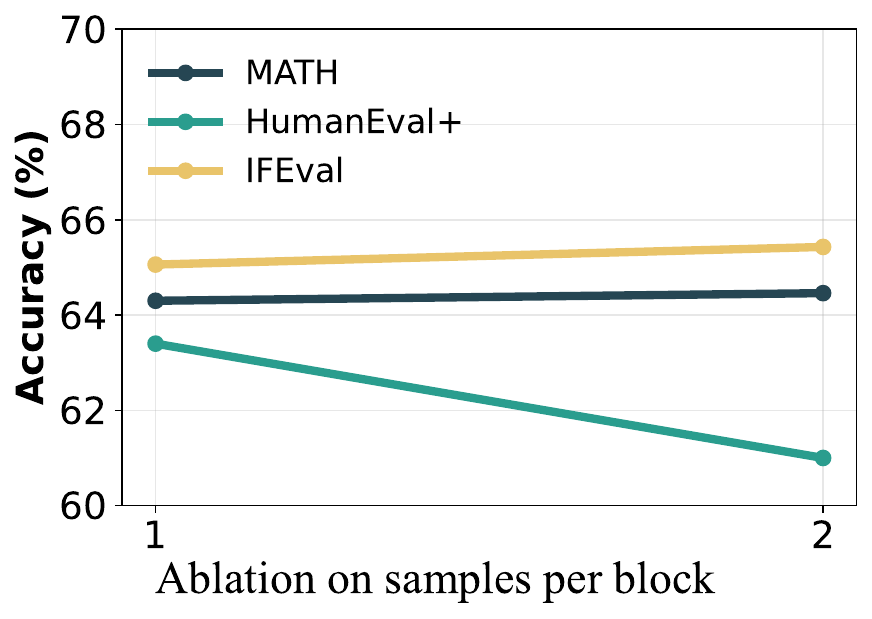}
    \includegraphics[width=0.32\textwidth]{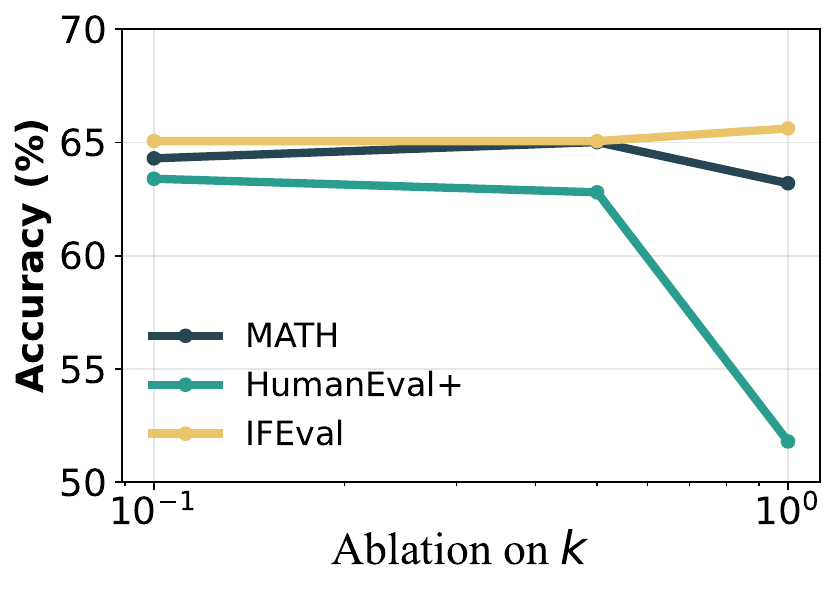}
    \includegraphics[width=0.32\textwidth]{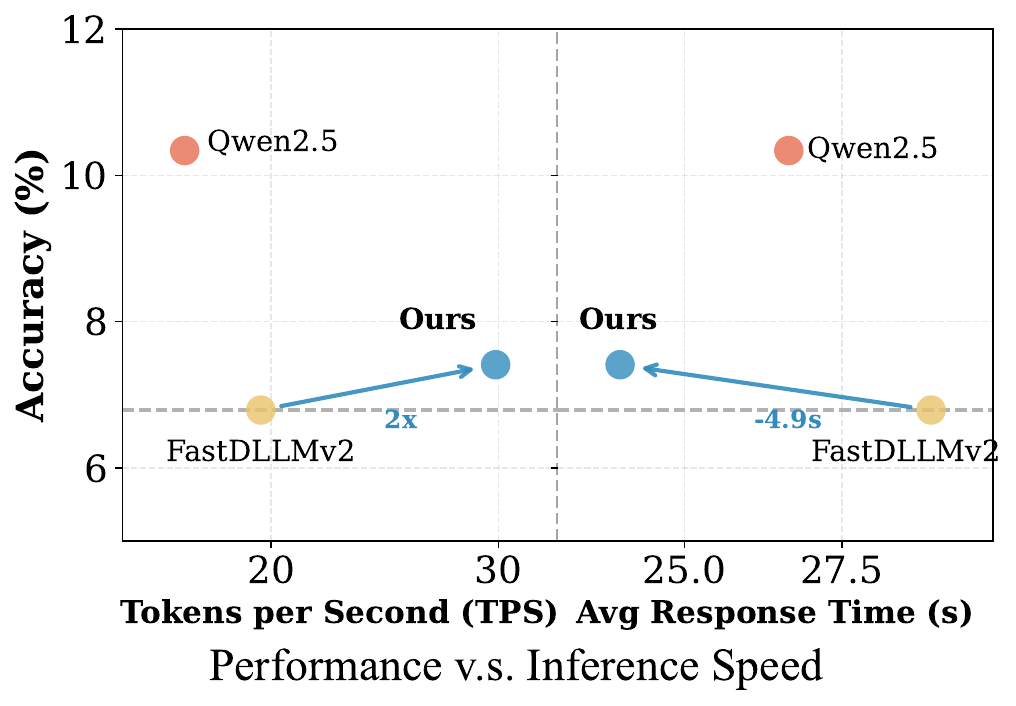}
    \caption{Ablation study of algorithm design and implementation choices; Inference speed comparison for  ours and baseline models. }
    \label{fig:ablation}
\end{figure*}

\subsection{Training and Inference Efficiency}\label{sec:training-efficiency}
\begin{table}[t]\small
    \centering
    \caption{Token per second (TPS) and average end to end inference time (seconds) comparison on GSM8K and Arena-Hard between \ours{}, Fast-dLLM-v2, and Qwen2.5-7B-Instruct.}
    \label{tab:training-efficiency}
    \begin{tabular}{l|cc|cc}
    \toprule
   {\textbf{Model}} & \multicolumn{2}{c}{\textbf{GSM8K}}& \multicolumn{2}{c}{\textbf{Arena-Hard}}\\
    & TPS & Accuracy & TPS & Score \\\midrule
    Qwen2.5-7B-Instruct & 38.9 & 87.87 & 16.20 & 10.43 \\
    Fast-dLLM-v2 & 38.84 & 82.34 & 19.55 & 6.79\\
     \ours{}& 38.80 & 85.97 & 29.87 & 7.41 \\\bottomrule
    \end{tabular}
\end{table}
As shown in \cref{fig:teaser}, our training cost matches that of DPO for ARMs while achieving comparable MATH performance to dLLM baselines that are trained with MATH-specific data.
Concretely, for each training example, we forward the preferred and dispreferred completions once under the policy model and once under the reference model, for a total of 4 forward passes, which follows the DPO training paradigm for ARMs. 
For online methods such as TRaDO or d1, training additionally requires generating completions, which equals hundrads of forward passes (cost  $N\times$), and for each completion, multiple forward passes are typically needed to estimate trajectory probability ratios. Overall, \ours{} is purely offline and requires training compute similarly to DPO for ARMs.

To evaluate inference efficiency, we report tokens per second (TPS) and average end-to-end inference time (in seconds) on GSM8K and Arena-Hard. We compare \ours{}, Fast-dLLM-v2, and Qwen2.5-7B-Instruct in \cref{tab:training-efficiency} and \cref{fig:ablation}.
As shown in \cref{tab:training-efficiency}, on easier benchmarks like GSM8K, dLLMs generally achieve higher TPS than ARMs. \ours{} attains efficiency comparable to Fast-dLLM-v2 while achieving substantial improvements in accuracy. On Arena-Hard, dLLMs still lag behind ARMs in overall performance. The weaker capabilities of dLLMs can lead to redundant generation, which may increase inference time. Nevertheless, \ours{} achieves the highest TPS and the lowest inference time, while narrowing the performance gap between dLLMs and ARMs.
 
\subsection{Ablations and Discussion}\label{sec:insights}

\paragraph{Scheduling Strategy.}
Our method adopts a confidence-based scheduling strategy that unmasks tokens with the highest confidence at each diffusion step.
However, the derivation in \cref{thm:loss-formulation} relies on an independence assumption across token positions.
To examine sensitivity to this assumption, we also evaluate a \emph{random-$k$} scheduler, which unmasks a randomly selected $k\%$ of masked tokens at each step, independent of confidence scores.
We further design a \emph{Gaussian-weighted top-$k$} scheduler, which reweights confidence scores by a Gaussian function over token indices, assigning higher weights to earlier tokens in the sequence.
We use $k=0.1$ for all schedulers.

As shown in \cref{fig:ablation}, the differences among scheduling strategies are generally small, and each strategy exhibits varying strengths across tasks.
In particular, for instruction-following benchmarks, the Gaussian-weighted top-$k$ scheduler outperforms the alternatives.
This suggests that prioritizing earlier tokens when estimating \emph{sequence probability ratios} can be beneficial in instruction-following settings.

\textbf{Number of Time Steps per Block.}
In our experiments, we set $k=0.1$, which corresponds to $T_B=10$ time steps per block.
We ablate the number of time steps per block by varying $k$.
As shown in \cref{fig:ablation}(f), larger $k$ implies fewer time steps per block and leads to worse performance.
In practice, inference-time decoding typically unmasks only a small number of tokens per step (often starting from one token and increasing toward the end), which supports using small $k$.
Overall, these results indicate that our estimator remains effective under inference-aligned unmasking schedules.

\textbf{Number of Samples per Block.}
Our method estimates the required trajectory probability ratios using a single sampled diffusion step per block.
We ablate the number of sampled steps in \cref{fig:ablation}(e).
As shown, increasing the number of sampled steps yields comparable performance, indicating that the estimator remains stable even with a single sample.

\textbf{\ours{} on LLaDA.}
We further apply \ours{} to LLaDA, which employs a single long diffusion block where each token attends to all other tokens.
As shown in \cref{fig:ablation}(c), \ours{} yields a substantial improvement over the LLaDA backbone.
This indicates that the proposed approach is not limited to block-wise diffusion architectures and also extends effectively to long-block dLLMs.

\textbf{Projection Function.}
We use the DPO~\cite{dpo} log-sigmoid projection function to map probability ratios into the loss.
Given extensive work on projection functions for preference optimization, we evaluate several commonly used alternatives, including DPO, IPO~\cite{ipo}, RSO~\cite{rso}, and APO~\cite{apo}.
As shown in \cref{fig:ablation}(b), our estimator is robust to the choice of projection function.
While different projections exhibit task-dependent differences, the log-sigmoid projection performs best on instruction-following benchmarks. See \cref{sec:moreablation} for the ablations of DPO $\lambda$ and parameter efficient training

\section{Conclusion}
We studied a  problem in alignment for diffusion large language models (dLLMs): computing the trajectory probability  is substantially more expensive and less stable than the token-factorized sequence probability used by autoregressive models. To address this, we introduced \ours{}, which combines \emph{state reduction} and \emph{ratio reduction} to obtain an efficient estimator that uses only the probabilities of newly unmasked tokens and can be implemented with a single forward pass per block. 
Experiments on 7B dLLMs show that \ours{} consistently improve instruction-following, STEM reasoning, and coding performance, outperforming representative alternative estimators and other open-source dLLMs. 
We hope these findings  motivate future work to scale up the post-training stage of dLLMs.

\clearpage
\newpage
\bibliographystyle{assets/plainnat}
\bibliography{example_paper}

\clearpage
\newpage
\beginappendix

\section{Theorem}\label{sec:proof}
\subsection{Proof of Theorem~\ref{thm:loss-formulation} (State Reduction)}
\label{sec:proof-state-reduction}

\begin{proof}
By the definition of the MDP trajectory probability in \cref{eq:marginal-policy}, the log-probability of a reverse diffusion trajectory $\bm{\tau}$ is a sum over transition log-probabilities:
\begin{equation}
    \log \pi_\theta(\bm{\tau})
    = \sum_{t=1}^{T} \log \pi_\theta(\bm{\tau}_{t-1} \mid \bm{\tau}_t, t).
\end{equation}
We partition the horizon into $N_B$ blocks of equal length $T_B$ such that $T=N_B T_B$.
Let $\bm{\tau}_{s,t}$ denote the state at block $s\in\{1,\ldots,N_B\}$ and within-block step $t\in\{1,\ldots,T_B\}$ (i.e., global time index $sT_B+t$).
Then we can re-index the sum as
\begin{equation}
    \log \pi_\theta(\bm{\tau})
    = \sum_{s=1}^{N_B}\sum_{t=1}^{T_B}
    \log \pi_\theta(\bm{\tau}_{s,t-1}\mid \bm{\tau}_{s,t}, t).
\end{equation}
For any function $f(t)$ over $\{1,\ldots,T_B\}$, we have the identity
$\sum_{t=1}^{T_B} f(t) = T_B \,\mathbb{E}_{t\sim U(1,T_B)}[f(t)]$.
Applying this to the inner sum yields
\begin{equation}
    \log \pi_\theta(\bm{\tau})
    = \sum_{s=1}^{N_B}
    T_B \,\mathbb{E}_{t\sim U(1,T_B)}
    \Bigl[
      \log \pi_\theta(\bm{\tau}_{s,t-1}\mid \bm{\tau}_{s,t}, t)
    \Bigr].
\end{equation}
Therefore, sampling a single time step within each block provides an unbiased estimator of the full-horizon log-probability, while reducing the number of required evaluations from $T$ to $N_B$.
\end{proof}
\subsection{Proof of Theorem~\ref{thm:ratio-reduced} (Ratio Reduction)}
\label{sec:proof-ratio-reduction}

\begin{proof}
Assume the reverse kernel factorizes across coordinates as in \cref{eq:reverse}. Then
\begin{equation}
    \pi_\theta(\bm{\tau}_{t-1}\mid \bm{\tau}_t,t)
    = \prod_{i=1}^{L}
    p_\theta(\tau_{t-1}^{(i)} \mid \tau_t^{(i)}, \bm{\tau}_t, \mu_\theta).
\end{equation}
Fix a transition $(\bm{\tau}_{t-1},\bm{\tau}_t)$. Partition indices $i\in[L]$ into three disjoint sets:
\begin{enumerate}
    \item \textbf{Unmasked/Context positions:}
    $\mathcal{U}:=\{i:\tau_t^{(i)}\in\mathcal{V}\}$.
    By \cref{eq:reverse}, such positions are deterministic and contribute a factor $1$.
    \item \textbf{Newly unmasked positions:}
    $\mathcal{I}_t:=\{i:\tau_t^{(i)}=\mask,\ \tau_{t-1}^{(i)}\in\mathcal{V}\}$.
    By \cref{eq:reverse},
    \[
    p_\theta
    = \frac{\alpha_{t-1}-\alpha_t}{1-\alpha_t}\,
      \mu_\theta(\tau_{t-1}^{(i)}\mid \bm{\tau}_t).
    \]
    \item \textbf{Still masked positions:}
    $\mathcal{M}:=\{i:\tau_t^{(i)}=\mask,\ \tau_{t-1}^{(i)}=\mask\}$.
    By \cref{eq:reverse},
    \[
    p_\theta
    = \frac{1-\alpha_{t-1}}{1-\alpha_t},
    \]
    which is schedule-dependent but model-independent.
\end{enumerate}

Now consider the ratio between the current policy and the reference policy, evaluated on the \emph{same} transition $(\bm{\tau}_{t-1},\bm{\tau}_t)$:
\begin{equation}
\frac{\pi_\theta(\bm{\tau}_{t-1}\mid \bm{\tau}_t,t)}
     {\pi_{\mathrm{ref}}(\bm{\tau}_{t-1}\mid \bm{\tau}_t,t)}
=
\prod_{i\in \mathcal{U}} \frac{1}{1}
\cdot
\prod_{i\in \mathcal{I}_t}
\frac{\frac{\alpha_{t-1}-\alpha_t}{1-\alpha_t}\,\mu_\theta(\tau_{t-1}^{(i)}\mid \bm{\tau}_t)}
     {\frac{\alpha_{t-1}-\alpha_t}{1-\alpha_t}\,\mu_{\mathrm{ref}}(\tau_{t-1}^{(i)}\mid \bm{\tau}_t)}
\cdot
\prod_{i\in \mathcal{M}}
\frac{\frac{1-\alpha_{t-1}}{1-\alpha_t}}
     {\frac{1-\alpha_{t-1}}{1-\alpha_t}}.
\end{equation}
All schedule-dependent coefficients cancel, leaving
\begin{equation}
\frac{\pi_\theta(\bm{\tau}_{t-1}\mid \bm{\tau}_t,t)}
     {\pi_{\mathrm{ref}}(\bm{\tau}_{t-1}\mid \bm{\tau}_t,t)}
=
\prod_{i\in \mathcal{I}_t}
\frac{\mu_\theta(\tau_{t-1}^{(i)}\mid \bm{\tau}_t)}
     {\mu_{\mathrm{ref}}(\tau_{t-1}^{(i)}\mid \bm{\tau}_t)}.
\end{equation}
Thus, the transition-level policy ratio depends only on categorical probabilities at newly unmasked positions and is independent of the masking schedule.
\end{proof}

\subsection{Bias and Variance Analysis of State Reduction}
\label{sec:bias_variance_analysis}

We analyze the bias and variance introduced by the block-wise estimator when approximating the sum of transition log-ratios along a fixed trajectory.

Let $\Delta_{s,t}$ denote the transition log-ratio at block $s$ and within-block step $t$:
\begin{equation}
    \Delta_{s,t}
    := \log \frac{\pi_\theta(\bm{\tau}_{s,t-1} \mid \bm{\tau}_{s,t}, t)}
                     {\pi_{\mathrm{ref}}(\bm{\tau}_{s,t-1} \mid \bm{\tau}_{s,t}, t)}.
\end{equation}
By Theorem~\ref{thm:ratio-reduced}, this quantity is schedule-independent and can be written as
\begin{equation}
    \Delta_{s,t}
    =
    \sum_{i \in \mathcal{I}_t(\bm{\tau}_{s,t-1}, \bm{\tau}_{s,t})}
    \log \frac{\mu_\theta(\tau_{s,t-1}^{(i)} \mid \bm{\tau}_{s,t})}
              {\mu_{\mathrm{ref}}(\tau_{s,t-1}^{(i)} \mid \bm{\tau}_{s,t})}.
\end{equation}

Define the full trajectory objective (sum over all blocks and steps) as
\begin{equation}
    L := \sum_{s=1}^{N_B} \sum_{t=1}^{T_B} \Delta_{s,t}.
\end{equation}
Our estimator samples one step $t_s \sim U(1,T_B)$ independently for each block $s$ and uses
\begin{equation}
    \hat{L} := \sum_{s=1}^{N_B} \hat{\ell}_s,
    \qquad
    \hat{\ell}_s := T_B \cdot \Delta_{s,t_s}.
\end{equation}

\begin{proposition}[Unbiasedness]
$\hat{L}$ is an unbiased estimator of $L$, i.e., $\mathbb{E}[\hat{L}] = L$.
\end{proposition}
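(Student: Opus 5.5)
The plan is to prove unbiasedness by linearity of expectation, conditioning on the trajectory $\bm{\tau}$, which is fixed. All $\Delta_{s,t}$ are then deterministic numbers, and the only randomness in $\hat{L}$ comes from the sampled indices $t_1,\dots,t_{N_B}$, each drawn uniformly from $\{1,\dots,T_B\}$.

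First, by linearity,
\[
\mathbb{E}[\hat{L}] = \sum_{s=1}^{N_B}\mathbb{E}[\hat{\ell}_s].
\]
This step does not need the independence of the $t_s$ across blocks; only the marginal law of each $t_s$ matters.

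Second, fix a block $s$ and compute the expectation of a function of a discrete uniform variable:
\[
\mathbb{E}[\hat{\ell}_s] = T_B\sum_{t=1}^{T_B}\frac{1}{T_B}\Delta_{s,t} = \sum_{t=1}^{T_B}\Delta_{s,t}.
\]
This is exactly the identity $\sum_{t} f(t)=T_B\,\mathbb{E}_{t\sim U(1,T_B)}[f(t)]$ already used in the proof of Theorem~\ref{thm:loss-formulation}. Summing over $s$ then gives $L$.

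There is no genuine obstacle. The one point to state carefully is that $\Delta_{s,t}$ is a well-defined deterministic quantity once $\bm{\tau}$ is fixed. This holds because Theorem~\ref{thm:ratio-reduced} expresses it through the newly unmasked set $\mathcal{I}_t$ of the given transition, and that set depends only on the trajectory, not on the sampled $t_s$. The estimator is therefore unbiased conditionally on $\bm{\tau}$. It is also unbiased for $\mathbb{E}_{\bm{\tau}}[L]$ whenever $\bm{\tau}$ is drawn independently of the $t_s$, by the tower property.
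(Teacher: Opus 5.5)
Your proposal is correct and follows essentially the same route as the paper: linearity of expectation over blocks, then the per-block identity $\mathbb{E}_{t_s\sim U(1,T_B)}[T_B\,\Delta_{s,t_s}]=\sum_{t=1}^{T_B}\Delta_{s,t}$ with the trajectory held fixed. Your additional remarks, that independence across blocks is not needed here and that a tower-property extension holds when $\bm{\tau}$ is drawn independently of the $t_s$, are valid but go beyond what the paper states.
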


\begin{proof}
By linearity of expectation,
\[
\mathbb{E}[\hat{L}]
=
\sum_{s=1}^{N_B} \mathbb{E}_{t_s\sim U(1,T_B)}[T_B \Delta_{s,t_s}]
=
\sum_{s=1}^{N_B} \sum_{t=1}^{T_B}\Delta_{s,t}
= L.
\]
\end{proof}

\begin{proposition}[Variance]
Let $\sigma_s^2 := \mathrm{Var}_{t\sim U(1,T_B)}(\Delta_{s,t})$. Then
\begin{equation}
    \mathrm{Var}(\hat{L})
    = \sum_{s=1}^{N_B} T_B^2 \sigma_s^2.
\end{equation}
\end{proposition}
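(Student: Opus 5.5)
The plan is to exploit the independence of the per-block samples $t_1,\dots,t_{N_B}$ together with the elementary scaling rule for variances. Throughout, the trajectory $\bm{\tau}$ is held fixed, so the only randomness in $\hat{L}$ comes from the sampled indices. Each $\Delta_{s,t}$ is therefore a deterministic number, and $\hat{\ell}_s = T_B \Delta_{s,t_s}$ is a function of $t_s$ alone.

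\textbf{Step 1 (independence across blocks).} Since the $t_s$ are drawn independently, the random variables $\hat{\ell}_1,\dots,\hat{\ell}_{N_B}$ are mutually independent. All pairwise covariances therefore vanish:
\[
\mathrm{Var}(\hat{L}) = \sum_{s=1}^{N_B}\mathrm{Var}(\hat{\ell}_s).
\]

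\textbf{Step 2 (scaling).} Within each block,
\[
\mathrm{Var}(\hat{\ell}_s) = \mathrm{Var}(T_B\Delta_{s,t_s}) = T_B^2\,\mathrm{Var}_{t\sim U(1,T_B)}(\Delta_{s,t}) = T_B^2\sigma_s^2 ,
\]
which is just the definition of $\sigma_s^2$. Summing over $s$ gives the claimed formula. If desired, one can also write $\sigma_s^2$ explicitly as $\frac{1}{T_B}\sum_{t}\Delta_{s,t}^2 - \bigl(\frac{1}{T_B}\sum_t \Delta_{s,t}\bigr)^2$, using the mean computed in the unbiasedness proposition.

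\textbf{Main subtlety.} The argument is essentially immediate, so the only real point of care is justifying that the cross-block covariance terms vanish. This requires two facts. First, the trajectory, and hence every $\Delta_{s,t}$, must be treated as fixed and not random. Second, the sampled steps must be independent across blocks, as stipulated in the definition of $\hat{L}$. In the single-forward-pass block-attention implementation, all blocks share one forward pass, but this does not couple the $t_s$ because they are sampled independently beforehand. I would state this conditioning explicitly before carrying out Steps 1 and 2.
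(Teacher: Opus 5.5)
Your proof is correct and follows the paper's argument: independence of the sampled $t_s$ across blocks makes the cross-block covariances vanish, and the scaling rule gives $\mathrm{Var}(\hat{\ell}_s)=T_B^2\sigma_s^2$ for each block. Your explicit remark that the trajectory, and hence every $\Delta_{s,t}$, is held fixed is a useful clarification that the paper leaves implicit.
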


\begin{proof}
Independence across blocks implies
$\mathrm{Var}(\hat{L}) = \sum_{s=1}^{N_B}\mathrm{Var}(\hat{\ell}_s)$.
For each block,
$\mathrm{Var}(\hat{\ell}_s) = \mathrm{Var}(T_B \Delta_{s,t_s}) = T_B^2\,\mathrm{Var}(\Delta_{s,t_s}) = T_B^2\sigma_s^2$.
Summing over $s$ yields the claim.
\end{proof}

This result highlights a compute--variance trade-off: the estimator remains unbiased while its variance scales as $T_B^2$.
In practice, block-wise dLLMs typically use small block sizes (e.g., up to 32 tokens)~\cite{fastdllmv2,sdar}, which keeps the variance moderate while enabling substantial compute savings.

\subsection{Derivation of the \ours{} Objective}
\label{sec:proof-ours-objective}
To derive the \ours{} objective, we start with the Bradley-Terry preference model and the standard DPO framework~\cite{dpo}. For a diffusion trajectory $\bm{\tau}$, the relationship between the optimal reverse policy $\pi_\theta$ and the underlying reward $r(\bm{y})$ is given by $\log \frac{\pi_\theta(\bm{\tau})}{\pi_{\text{ref}}(\bm{\tau})} = \frac{1}{\lambda}r(\bm{y}) - \log Z$, where $Z$ is the partition function, only depends on the context. Substituting this into the preference probability $P(\bm{y}^+ \succ \bm{y}^-) = \sigma(r(\bm{y}^+) - r(\bm{y}^-))$ yields the trajectory-based DPO objective:
\begin{equation}
   -\mathbb{E}_{(\bm{y}^+, \bm{y}^-) \sim \mathcal{D}} \log \sigma \left( \lambda \log \frac{\pi_\theta(\bm{\tau}^+)}{\pi_{\text{ref}}(\bm{\tau}^+)} - \lambda \log \frac{\pi_\theta(\bm{\tau}^-)}{\pi_{\text{ref}}(\bm{\tau}^-)} \right).
\end{equation}

Using \cref{thm:loss-formulation}, we decompose the global trajectory log-ratio into a sum of block-wise expectations over the $N_B$ blocks:
\begin{equation}
    \log \frac{\pi_\theta(\bm{\tau})}{\pi_{\text{ref}}(\bm{\tau})} = \sum_{s=1}^{N_B} \mathbb{E}_{t \sim U(1, T_B)} T_B \left[ \log \pi_\theta(\bm{\tau}_{s,t-1} \mid \bm{\tau}_{s,t}, t) - \log \pi_{\text{ref}}(\bm{\tau}_{s,t-1} \mid \bm{\tau}_{s,t}, t) \right].
    \label{eq:decomp-step}
\end{equation}
Here we sample the same time step $t$ within each block for both the policy and reference policy. By substituting this estimation into the trajectory-based objective, the loss becomes:
\begin{equation}
  -\mathbb{E}_{\mathcal{D}} \log \sigma \left( \sum_{s=1}^{N_B} \lambda T_B \mathbb{E}_{t \sim U(1, T_B)} \log \frac{\pi_\theta(\bm{\tau}_{s,t-1}^+ \mid \bm{\tau}_{s,t}^+, t)}{\pi_{\text{ref}}(\bm{\tau}_{s,t-1}^+ \mid \bm{\tau}_{s,t}^+, t)} - \sum_{s=1}^{N_B} \lambda T_B \mathbb{E}_{t \sim U(1, T_B)} \log \frac{\pi_\theta(\bm{\tau}_{s,t-1}^- \mid \bm{\tau}_{s,t}^-, t)}{\pi_{\text{ref}}(\bm{\tau}_{s,t-1}^- \mid \bm{\tau}_{s,t}^-, t)} \right).
\end{equation}

On this basis, we leverage the \textbf{Ratio Reduction} property derived in \cref{thm:ratio-reduced}. As shown in \eqref{eq:ratio-reduced-correct}, for any single transition $(\bm{\tau}_{s,t-1}, \bm{\tau}_{s,t})$, all schedule-dependent coefficients cancel out in the ratio, simplifying the transition-level log-ratio to:
\begin{equation}
    \log \frac{\pi_\theta(\bm{\tau}_{s,t-1} \mid \bm{\tau}_{s,t}, t)}{\pi_{\text{ref}}(\bm{\tau}_{s,t-1} \mid \bm{\tau}_{s,t}, t)} = \sum_{i \in \mathcal{I}_t(\bm{\tau}_{s,t-1}, \bm{\tau}_{s,t})} \log \frac{\mu_\theta(\tau_{s,t-1}^{(i)} \mid \bm{\tau}_{s,t})}{\mu_{\text{ref}}(\tau_{s,t-1}^{(i)} \mid \bm{\tau}_{s,t})}.
\end{equation}

Finally, by substituting this simplified ratio back into \eqref{eq:decomp-step}, we arrive at the \ours{} objective presented in \cref{sec:trajectory_reduction}:
\begin{equation}      
  \mathcal{L}_{\text{\ours}}(\theta) = - \mathbb{E}_{\mathcal{D}} \log \sigma \Biggl( 
        \sum_{s=1}^{N_B} \mathbb{E}_{t \sim U(1, T_B)}\lambda T_B \sum_{i \in \mathcal{I}_t^+} \log \frac{\mu_\theta(\tau_{s,t-1}^{+,(i)} \mid \bm{\tau}_{s,t}^+)}{\mu_{\text{ref}}(\tau_{s,t-1}^{+,(i)} \mid \bm{\tau}_{s,t}^+)} 
        - \sum_{s=1}^{N_B} \mathbb{E}_{t \sim U(1, T_B)} \lambda T_B \sum_{i \in \mathcal{I}_t^-} \log \frac{\mu_\theta(\tau_{s,t-1}^{-,(i)} \mid \bm{\tau}_{s,t}^-)}{\mu_{\text{ref}}(\tau_{s,t-1}^{-,(i)} \mid \bm{\tau}_{s,t}^-)} \Biggr).
\end{equation}
where $\mathcal{I}_t$ denotes the set of newly unmasked coordinates in each trajectory, and $t$ is a randomly sampled time step within each block.

\subsection{Generalization to Policy Gradient Methods}
\label{sec:generalization_to_pg}

The reduction properties in Theorems~\ref{thm:loss-formulation} and~\ref{thm:ratio-reduced} also apply to policy-gradient-style objectives that depend on likelihood ratios between the current policy and a reference (or old) policy.

Consider a KL-regularized RL objective of the form
\begin{equation}
    J(\theta)
    = \mathbb{E}_{\bm{\tau}\sim \pi_\theta}
    \Bigl[ R(\bm{\tau}) - \lambda\,\mathrm{KL}\!\bigl(\pi_\theta(\bm{\tau}) \,\|\, \pi_{\mathrm{ref}}(\bm{\tau})\bigr)\Bigr],
\end{equation}
where $R(\bm{\tau})$ is a trajectory-level reward and $\pi_{\mathrm{ref}}$ is a fixed reference policy.
A standard score-function (REINFORCE) gradient estimator is
\begin{equation}
    \nabla_\theta J(\theta)
    = \mathbb{E}_{\bm{\tau}\sim \pi_\theta}
    \Bigl[ \nabla_\theta \log \pi_\theta(\bm{\tau}) \cdot A(\bm{\tau}) \Bigr],
\end{equation}
where $A(\bm{\tau})$ is an advantage estimate.

In PPO-style surrogates, one typically uses a likelihood ratio
$r(\bm{\tau}) := \frac{\pi_\theta(\bm{\tau})}{[\pi_{\mathrm{ref}}(\bm{\tau})]_{\mathrm{sg}}}$
(with stop-gradient on the denominator) and optimizes objectives of the form
\begin{equation}
    \mathcal{L}_{\mathrm{PG}}(\theta)
    = \mathbb{E}_{\bm{\tau}\sim \pi_\theta}
    \Bigl[ r(\bm{\tau}) \cdot A(\bm{\tau}) \Bigr].
\end{equation}
For dLLMs, $\pi_\theta(\bm{\tau})$ decomposes over reverse diffusion transitions as in \cref{eq:marginal-policy}, and direct evaluation of $r(\bm{\tau})$ is expensive.
By Theorem~\ref{thm:loss-formulation}, the trajectory-level log-ratio can be estimated block-wise by sampling one step per block, and by Theorem~\ref{thm:ratio-reduced}, each sampled transition ratio cancels all schedule-dependent coefficients.
Therefore, the same reductions can be used to compute PPO/DPO-style ratios with substantially fewer forward passes.

\section{Block Attention}
\label{sec:attention-mask-for-block-attention}

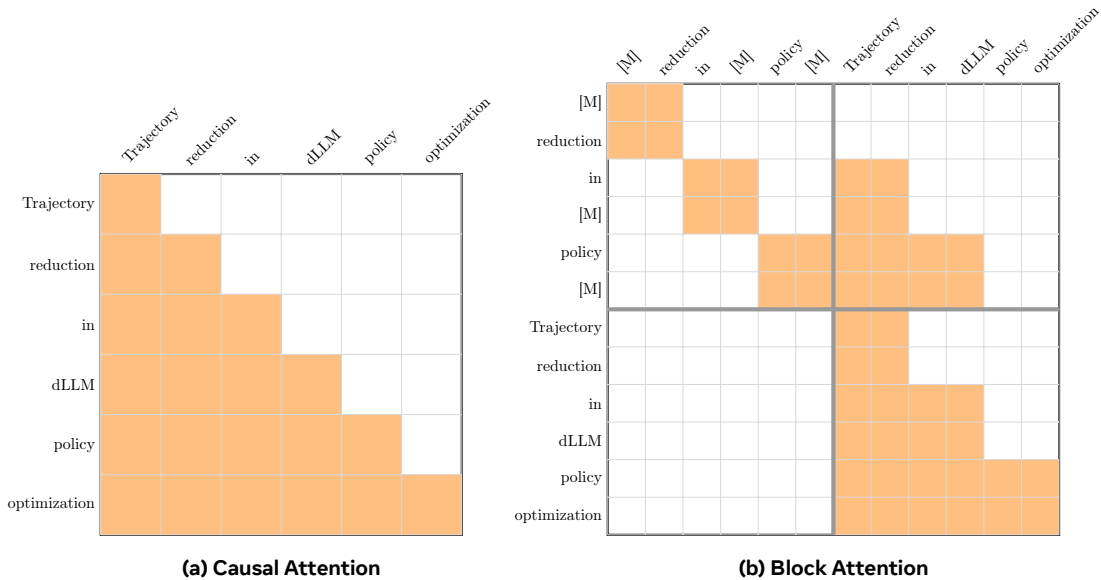
\begin{figure}[ht]
    \centering
    \begin{tikzpicture}[scale=0.8]
        \def\tokens{Trajectory, reduction, in, dLLM, policy, optimization}
        \def\maskedt{[M], reduction, in, [M], policy, [M]}
        \def\n{6} 
        
        \begin{scope}[shift={(0,0)}]
            \draw[thick] (0,0) rectangle (\n,\n);
            \foreach \i in {0,...,5} {
                \foreach \j in {\i,...,5} {
                    \fill[orange!50] (\i, 5-\j) rectangle (\i+1, 6-\j);
                }
            }
            \draw[step=1cm, gray!30, very thin] (0,0) grid (\n,\n);
            
            \node[below=0.2cm] at (\n/2, 0) {\small \textbf{(a) Causal Attention}};
            \foreach \t [count=\i] in \tokens {
                \node[left, scale=0.6] at (0, \n-\i+0.5) {\t};
                \node[above, scale=0.6, rotate=45, anchor=south west] at (\i-0.5, \n) {\t};
            }
        \end{scope}
      \end{tikzpicture}
      \begin{tikzpicture}[scale=0.5]
        \def\tokens{Trajectory, reduction, in, dLLM, policy, optimization}
        \def\maskedt{[M], reduction, in, [M], policy, [M]}
        \def\n{12}

        \begin{scope}[shift={(\n+2,0)}]
         
            \draw[thick] (0,0) rectangle (\n,\n);
            
            \def\mid{6}

            \fill[orange!50] (0+\mid,-\mid + \n-1) rectangle (2+\mid,- \mid + \n); 
              \fill[orange!50] (0+\mid, -\mid + \n-2) rectangle (2+\mid, - \mid + \n-1);
              \fill[orange!50] (0+\mid, -\mid + \n-3) rectangle (4+\mid, -\mid + \n-2);
            \fill[orange!50] (0+\mid, -\mid + \n-4) rectangle (4+\mid, -\mid + \n-3);
            \fill[orange!50] (0+\mid, -\mid + \n-5) rectangle (6+\mid, -\mid + \n-4);
            \fill[orange!50] (0+\mid, -\mid + \n-6) rectangle (6+\mid, -\mid + \n-5);

            \fill[orange!50] (0+\mid, \n-3) rectangle (2+\mid, \n-2);
            \fill[orange!50] (0+\mid, \n-4) rectangle (2+\mid, \n-3);
            \fill[orange!50] (0+\mid, \n-5) rectangle (4+\mid, \n-4);
            \fill[orange!50] (0+\mid, \n-6) rectangle (4+\mid, \n-5);

            \fill[orange!50] (0, \n) rectangle (2, \n-2);
            \fill[orange!50] (0 , \n-1) rectangle (2, \n-2);
            \fill[orange!50] (2 , \n-2) rectangle (4, \n-3);
            \fill[orange!50] (2 , \n-3) rectangle (4, \n-4);
            \fill[orange!50] (4 , \n-4) rectangle (6, \n-5);
            \fill[orange!50] (4 , \n-6) rectangle (6, \n-5);

            \draw[step=1cm, gray!30, very thin] (0,0) grid (\n,\n);
            
            \draw[ultra thick, gray!80] (\mid, 0) -- (\mid, \n);
            \draw[ultra thick, gray!80] (0, \mid) -- (\n, \mid);

            \node[below=0.2cm] at (\n/2, 0) {\small \textbf{(b) Block Attention }};
            \foreach \t [count=\i] in \maskedt {
                \node[left, scale=0.6] at (0, \n-\i+0.5) {\t};
                \node[above, scale=0.6, rotate=45, anchor=south west] at (\i-0.5, \n) {\t};
            }
            \foreach \t [count=\i] in \tokens {
                \node[left, scale=0.6] at (0, \n-\i+0.5-\mid) {\t};
                \node[above, scale=0.6, rotate=45, anchor=south west] at (\i-0.5+\mid, \n) {\t};
            }
        \end{scope}
    \end{tikzpicture}
    \caption{Causal attention and block attention in the training time. In (b), each row represents the tokens that an input token attends to. For a masked token, it attends to the masked input of the same block and clean tokens in the previous blocks.}
    \label{fig:attention_comparison}
\end{figure}
This section explains how the expectation in \cref{eq:loss-formulation} can be computed with a single forward pass.
The key is a customized training-time attention mask that allows different tokens within the same packed sequence to observe different contexts, effectively simulating multiple partially masked states (one per block) in parallel.

\paragraph{Causal attention in ARMs.}
As shown in \cref{fig:attention_comparison}(a), autoregressive (ARM) training uses a \emph{causal} mask so that token $i$ attends only to tokens $<i$ (and the prompt).
Consequently, a single forward pass on the full sequence yields token-wise probabilities $\pi_\theta(y_i\mid y_{<i})$, matching the factorization in \cref{eq:arm-factorization-detailed}.

\paragraph{Why this fails for dLLMs.}
For dLLMs, the relevant quantity in \cref{eq:marginal-policy} is the transition probability conditioned on a \emph{partially masked} state.
At training time, for token positions in block $s$, we need probabilities conditioned on: (i) all previous blocks being clean, and (ii) the current block being partially masked.
A single standard forward pass on one corrupted sequence cannot simultaneously provide the correct conditioning for all blocks.

\paragraph{Block attention mask.}
In \cref{fig:attention_comparison}(b), we show a packed sequence that concatenates  partially masked tokens with the corresponding clean tokens, and define a block attention mask such that:
(i) each token in a block attends to clean tokens from previous blocks, and
(ii) masked tokens in the current block attend to the partially masked tokens within the same block.
This design ensures that the logits for masked positions are conditioned on the desired ``middle state'' for each block.

With this mask, a single forward pass is equivalent to forwarding $N_B$ separate partially masked sequences (one per block state).
By extracting logits from the masked half, we obtain exactly the per-block sampled transition terms required by \cref{eq:loss-formulation}.
If multiple time steps per block are needed, we can resample masks and repeat the forward pass accordingly.

\section{Pseudocode for \ours}
\label{sec:app-algorithm}
The pseudocode for \ours{} is shown in \cref{alg:dtrpo}.

\begin{algorithm}[t]
\caption{\ours: Block-wise Ratio Reduction for Preference Optimization.}
\label{alg:dtrpo}
\begin{algorithmic}[1]
\REQUIRE Preference dataset $\mathcal{D}$ with triples $(x, y^+, y^-)$;
policy model $\mu_\theta$; reference model $\mu_{\mathrm{ref}}$;
block count $N_B$; within-block steps $T_B$;
scheduler $\mathrm{Select}(\cdot)$ (e.g., top-$k$ by confidence);
scale $\lambda$.
\FOR{each $(x, y^+, y^-)\in\mathcal{D}$}
    \STATE Sample one time step per block: $t_s \sim U(1,T_B)$ for $s=1,\ldots,N_B$
    \FOR{each completion $y \in \{y^+, y^-\}$}
        \STATE Construct a packed, partially masked input $\tilde{\bm{\tau}}(y)$ using $\{t_s\}_{s=1}^{N_B}$ and the block-attention mask (\cref{sec:attention-mask-for-block-attention})
        \STATE Run one forward pass to obtain categorical predictions:
        \[
          \mu_\theta(\cdot \mid \tilde{\bm{\tau}}(y)),\quad
          \mu_{\mathrm{ref}}(\cdot \mid \tilde{\bm{\tau}}(y))
        \]
        \STATE For each block $s$, use the scheduler to select newly-unmasked indices
        \[
          \mathcal{I}_{s} \leftarrow \mathrm{Select}\bigl(\tilde{\bm{\tau}}(y), s\bigr)
        \]
        \STATE Compute the block-wise log-ratio score
        \[
          S(y) \leftarrow \sum_{s=1}^{N_B} \sum_{i\in \mathcal{I}_{s}}
          \Bigl[\log \mu_\theta(y^{(i)}\mid \tilde{\bm{\tau}}(y)) - \log \mu_{\mathrm{ref}}(y^{(i)}\mid \tilde{\bm{\tau}}(y))\Bigr]
        \]
    \ENDFOR
    \STATE $z \leftarrow \lambda T_B \cdot \bigl(S(y^+) - S(y^-)\bigr)$
    \STATE $\mathcal{L}_\text{\ours{}} \leftarrow -\log \sigma(z)$ 
\ENDFOR
\end{algorithmic}
\end{algorithm}

\section{Experiment Details}
\subsection{Training Pipeline} \label{sec:trainingdetails}
\begin{figure*}[t]
    \centering
    \includegraphics[width=\linewidth]{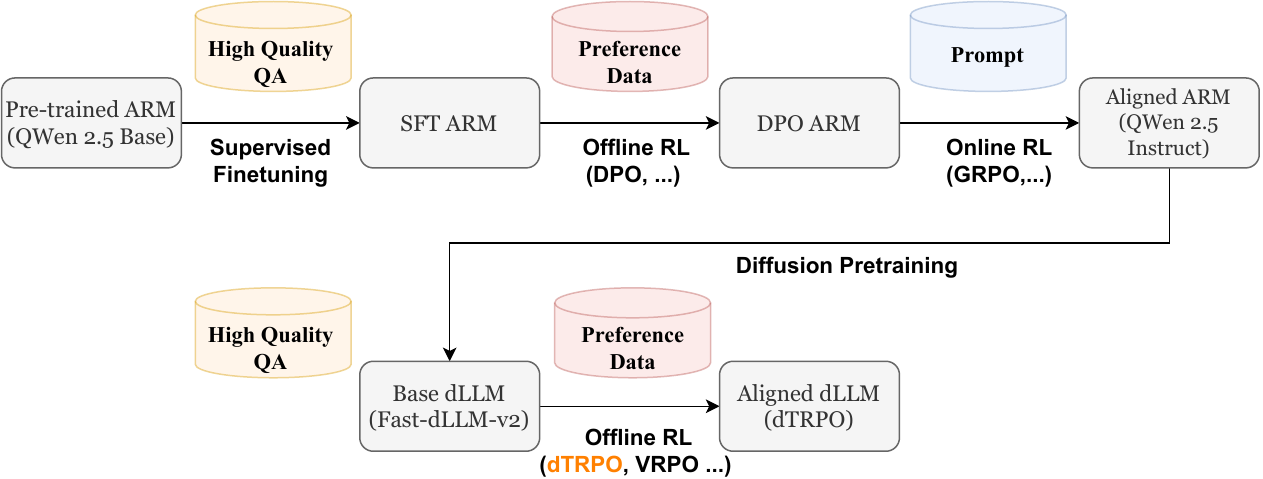}
    \caption{The post-training pipeline comparison between ARMs and dLLMs.}
    \label{fig:pipeline}
    \vspace{-0.5cm}
\end{figure*}

As illustrated in \cref{fig:pipeline}, \ours{} follows a structured post-training workflow for dLLMs. Our base model, Fast-dLLM-v2-7B, is derived from Qwen2.5-7B-Instruct via diffusion pre-training. This initial training stage is functionally analogous to the Supervised Fine-Tuning (SFT) phase of Autoregressive Models (ARMs); however, notably, the dLLM is initialized from a well-trained ARM checkpoint rather than starting from scratch. 

Subsequently, we execute the Direct Preference Optimization (DPO) stage leveraging our proposed method on the preference dataset. This phase serves as the counterpart to the offline Reinforcement Learning (RL) stage in traditional ARM pipelines, aimed at further refining the sequence probability distribution. The competitive baselines evaluated in \cref{tab:placeholder} all focus on this specific alignment stage to ensure a fair and consistent comparison with \ours{}.
We simply mix SmolTalk2, math, and code preference data  evenly.  Although the SmolTalk2 dataset is much larger than the other two, its data is less diverse and mostly consists of linguistic tasks requiring the model to write specific types of sentences. The benefit of mixing the data is shown in \cref{tab:mixdata}. We further show the results using 100k on-policy data points, where the prompts are from the OpenMathReasoning~\cite{moshkov2025aimo} and OpenCodeReasoning~\cite{ahmad2025opencodereasoning} datasets, and prompt the model to generate responses ahead of time. The on-policy data leads to better improvements in general instruction-following (IF) ability during DPO training. However, for rigorous tasks like math and coding, where it is difficult for the model to independently evolve quickly toward the correct logic, on-policy training falls short of the performance achieved using high-quality off-policy data.

\begin{table}[h]
    \centering \small
    \caption{The benefit of different data source.}
    \label{tab:mixdata}
    \begin{tabular}{lccc}
        \toprule
        Data Source & MATH & LCB & IF \\\midrule
 Fast-dLLM-v2& 60.26& 59.1&62.11\\
        \midrule
         OpenMathReasoning + OpenCodeReasoning+On-policy Response& 62.74& 59.10&65.62\\\midrule
        SmolTalk2 only & 63.18 & 13.36 & 67.10 \\
        SmolTalk2 + Math + Code & 64.30 & 15.17 & 65.06 \\
 \bottomrule
    \end{tabular}
\end{table}

\subsection{Generation Protocol}
For LLaDA and Dream, we adopt the Fast-dLLM generation protocol enhanced with prefix caching~\cite{fastdllm}, which we further adapt for batch generation\footnote{\url{https://github.com/NVlabs/Fast-dLLM}}. 
For Fast-dLLM-v2, we utilize the official batch generation framework~\cite{fastdllmv2}. Specifically, we employ greedy decoding with a threshold of $\epsilon=1.0$, which corresponds to generating one token per time step, supported by block-level caching\footnote{\url{https://github.com/NVlabs/Fast-dLLM/tree/main/v2}}.

For the additional evaluations involving the Qwen3 series, we leverage the LMDeploy~\cite{lmdeploy} framework as implemented by SDAR to facilitate efficient batch generation\footnote{\url{https://github.com/JetAstra/SDAR}}.

\subsection{Evaluation Dataset Details}\label{sec:evaluation-dataset-details}
All tasks are evaluated in a zero-shot setting, with the maximum generation length capped at 2048 tokens and the decoding temperature set to 0 to ensure deterministic output.

Specifically, we employ the \texttt{lm-eval-harness} framework~\cite{lm-eval-harness}\footnote{\url{https://github.com/EleutherAI/lm-evaluation-harness}} to evaluate performance on GSM8K~\cite{gsm8k}, GPQA~\cite{gpqa}, MATH~\cite{math}, and IFEval~\cite{ifeval}. 
For coding benchmarks, we utilize the official LiveCodeBench evaluation protocol for LCBv6~\cite{livecodebench}\footnote{\url{https://github.com/LiveCodeBench/LiveCodeBench}}, while MBPP~\cite{mbpp} and HumanEval~\cite{humaneval} are evaluated using the EvalPlus framework~\cite{evalplus}\footnote{\url{https://github.com/evalplus/evalplus}}.

\begin{longtable}{l p{14cm}}
  \caption{Prompts used for different evaluation datasets.} \label{tab:dataset-prompts} \\
  \toprule
  \textbf{Role} & \textbf{Prompt Template} \\ \midrule
  \endfirsthead

  \toprule
  \textbf{Role} & \textbf{Prompt Template (Continued)} \\ \midrule
  \endhead

  \midrule
  \multicolumn{2}{r}{\textit{Continued on next page}} \\
  \endfoot
  \bottomrule
  \endlastfoot

  \multicolumn{2}{l}{\emph{GSM8K (CoT):}} \\
  \textbf{User} & Question: \{question\} \\
                & Please reason step by step, and put your final answer within \textbackslash boxed\{ \}. \\ \midrule

  \multicolumn{2}{l}{\emph{MATH:}} \\
  \textbf{User} & Problem: \{question\} \\
                & Please reason step by step, and put your final answer within \textbackslash boxed\{ \}. \\ \midrule

  \multicolumn{2}{l}{\emph{GPQA (Generative, CoT, Diamond):}} \\
  \textbf{User} & What is the correct answer to this question: \{question\} \\
                & Choices: (A) \{choice1\} (B) \{choice2\} (C) \{choice3\} (D) \{choice4\} \\
                & Let's think step by step: \\ \midrule

  \multicolumn{2}{l}{\emph{LiveCodeBench (v6):}} \\
  \textbf{System} & You are a helpful assistant. \\
  \textbf{User} & You will be given a question (problem specification) and will generate a correct Python program that matches the specification and passes all tests... \\
                & Enclose your code within delimiters as follows: \\
                & \texttt{```python} \\
                & \texttt{\# YOUR CODE HERE} \\
                & \texttt{```} \\ \midrule

  \multicolumn{2}{l}{\emph{MBPP+ \& HumanEval+ (extra test):}} \\
  \textbf{User} & Please provide a self-contained Python script... \{Prompt\} \\
  \textbf{Assistant} & Below is a Python script with a self-contained function... \\
                     & \texttt{```python} \\ 
\end{longtable}
\subsection{Baseline Models and Comparison}\label{sec:comparison-baseline-models}

We evaluate our method against state-of-the-art open-source dLLMs and several reproduced baselines built upon the \texttt{Fast-dLLM-v2-7B} architecture.
The open-source dLLMs include LLaDA-Instruct~\cite{llada}\footnote{\url{https://huggingface.co/GSAI-ML/LLaDA-7B-Instruct}}, LLaDA 1.5~\cite{llada1_5}\footnote{\url{https://huggingface.co/GSAI-ML/LLaDA-1.5}}, Dream-Instruct~\cite{dream}\footnote{\url{https://huggingface.co/Dream-org/Dream-v0-Instruct-7B}}, and Fast-dLLM-v2~\cite{fastdllmv2}\footnote{\url{https://huggingface.co/Efficient-Large-Model/Fast_dLLM_v2_7B}}. For a fair comparison, all models are evaluated using our unified evaluation protocol.

To further validate our approach, we reproduce several training methodologies—including SFT, VRPO, and DPO with mean-field probability estimation—and adapt them for the block-wise dLLM framework. 
To ensure consistency, these baselines utilize the same training data and configurations as \ours{}. While the forward pass remains identical across these models, the primary distinction lies in the loss computation from the logits:
\begin{itemize}
    \item \textbf{SFT}: Following \citet{fastdllmv2}, we compute the loss for masked tokens using the ELBO-based objective~\cite{ou2025your}.
    \item \textbf{VRPO}: The trajectory probability is derived from the ELBO over the masked tokens, which is subsequently optimized using the DPO objective.
    \item \textbf{DPO with Mean-field}: The trajectory probability is estimated by averaging the log-likelihoods over the masked tokens.
\end{itemize}

\section{More Comparisons}
\subsection{Comparison in Other Backbone} \label{sec:comparisonqwen3}
\begin{table}[h]
    \centering\scriptsize
    \caption{Performance Gain of our method with those dLLMs trained from QWen3. }
    \label{tab:more-comparisons}
    \setlength{\tabcolsep}{3pt}
    \begin{tabular}{l|ccccccc|c}
        \toprule
        Model & GPQA & GSM8K & MATH & LCBv6 & MBPP+ & HumanEval+ & IFEval  &Avg. Gain\\\midrule
        Qwen3-8B       & 51.52 & 89.61 & 82.98 & 42.18 & 66.40 & 80.50 & 82.99  & -\\
        SDAR           & 28.28\belowbaseline{-23.24} & 91.28\abovebaseline{1.67} & 74.22\belowbaseline{-8.76} & 20.57\belowbaseline{-21.61} & 66.40\neutralbaseline & 65.90\belowbaseline{-14.6} & 63.03\belowbaseline{-19.96}  &-12.36\\
        TraDO          & 38.89\belowbaseline{-12.63} & 91.05\abovebaseline{1.44} & 78.74\belowbaseline{-4.24} & 29.00\belowbaseline{-13.1} & 66.40\neutralbaseline & 67.10\belowbaseline{-13.4} & 62.85\belowbaseline{-20.14}  &-8.88\\
        \midrule
        Qwen2.5-7B     & 36.36 & 87.87 & 73.06 & 24.42 & 67.50 & 74.40 & 71.38  & -\\
          \ours{}          & 30.30\belowbaseline{-6.06} & 85.97\belowbaseline{1.90} & 64.30\belowbaseline{-8.76} & 15.17\belowbaseline{-9.25} & 51.60\belowbaseline{-15.9} & 63.40\belowbaseline{-11.0} & 65.06\belowbaseline{-6.32}  &-8.46\\\bottomrule
    \end{tabular}
  \end{table}
\paragraph{Qwen3} A key motivation for diffusion LLMs is to inherit capabilities from strong ARMs through initialization and then perform post-training under the diffusion policy, as illustrated in \cref{fig:pipeline}.
We further compare our method with SDAR~\cite{sdar} and TraDO~\cite{trado}, two state-of-the-art dLLMs initialized from Qwen3~\cite{qwen3} in \cref{tab:more-comparisons}. Here we omit benchmarks evaluated via LLM-as-a-judge.

SDAR and TraDO exhibit less stable adaptation: they can be strong on some benchmarks (e.g., MBPP, GSM8K) but degrade substantially on others (e.g., GPQA, IFEval).
In contrast, our method achieves a more consistent performance profile.
Importantly, as discussed in \cref{fig:teaser} and \cref{sec:training-efficiency}, our training is purely offline, does not require on-policy generation, and does not rely on benchmark-specific training sets.

\paragraph{LLaDA}
We report results in \cref{tab:lladaposttrain} using the dPad~\cite{dpad} inference protocol with dual cache and a shorter generation length, noted in the brackets.
\begin{table}[h]
    \centering
    \caption{LLaDA with  post-training methods. }
    \label{tab:lladaposttrain}
    \begin{tabular}{lccc}
        \toprule
        Method & MATH (256) & LCB (1024) & IF (512) \\
        \midrule
        LLaDA   & 27.84 & 10.29 & 55.08 \\
        \, + ELBO-SFT & 30.71 & 7.43  & 59.15 \\
        \, + VRPO    & 29.08 & 6.16  & 56.93 \\
        \, + \ours{}   & 34.88 & 10.86 & 61.00 \\
        \bottomrule
    \end{tabular}
\end{table}

\paragraph{1B Scale Model. }
We report the post-training performance based on Fast-dLLM-v2 1.5B\footnote{\url{https://huggingface.co/Efficient-Large-Model/Fast_dLLM_v2_1.5B}} in \cref{tab:1bmodel}.
\begin{table}[h]
    \centering
    \caption{Fast-dLLM-v2 1.5B with post training methods.}
    \label{tab:1bmodel}
    \begin{tabular}{lccc}
        \toprule
        Method & MATH & LCB & IF \\
        \midrule
        Fast-dLLM-v2 1.5B & 37.9 & 5.4  & 44.73 \\
        \, + ELBO-SFT     & 34.86 & 5.31 & 35.3  \\
        \, + VRPO         & 39.66 & 5.69 & 43.81 \\
        \, + \ours{}        & 38.96 & 6.45 & 44.36 \\
        \bottomrule
    \end{tabular}
\end{table}


\subsection{Ablation on the Parameter Efficient Finetuning}\label{sec:moreablation}
\begin{figure}[h]
    \centering
    \includegraphics[width=0.4\textwidth]{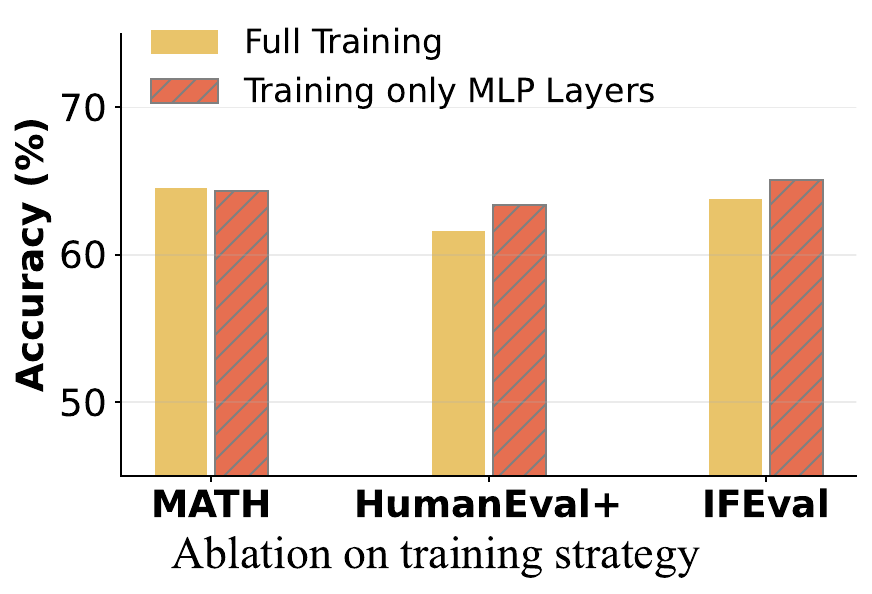}
    \includegraphics[width=0.4\textwidth]{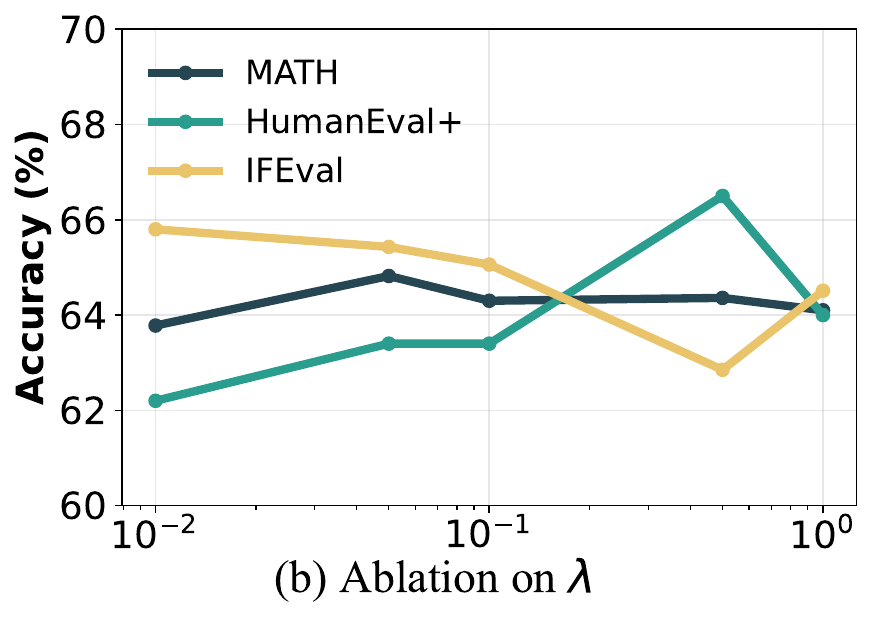}
    \caption{Ablation on the parameter efficient finetuning and DPO hyperparameters.}
    \label{fig:ablation_training}
\end{figure}
In our training setup, we selectively update only the MLP layers and the output projection layers. 
Given that Fast-dLLM-v2 is initialized from the Qwen2.5, such parameter-efficient training is crucial for preserving the pre-trained knowledge and preventing catastrophic forgetting. 
As illustrated in \cref{fig:ablation_training}(a), restricting the optimization to MLP layers not only significantly reduces the computational overhead but also leads to superior performance across benchmarks compared to full-parameter fine-tuning.

The most critical hyperparameter in our DPO-style objective is the KL-divergence regularization coefficient, $\lambda$, which balances the sequence probability maximization and the divergence from the reference model. 
We conduct an ablation on $\lambda$ in \cref{fig:ablation_training}(b). 
The results show that the model performance remains stable across a wide range of $\lambda$ values, indicating that our proposed method is robust and not overly sensitive to the precise choice of regularization strength.

\section{Qualitative Analysis}
We provide some qualitative examples of \ours{} and Qwen2.5-Instruct on coding and instruction following tasks. From our observations, \ours{} can generate correct answers with faster speed and follow the instructions more closely.
\newtcolorbox{mybox}[1]{colback=gray!5,colframe=gray!50,fonttitle=\bfseries,breakable,title=#1}

\begin{mybox}{Math Question}
    Jen enters a lottery by picking $4$ distinct numbers from $S=\{1,2,3,\cdots,9,10\}.$ $4$ numbers are randomly chosen from $S.$ She wins a prize if at least two of her numbers were $2$ of the randomly chosen numbers, and wins the grand prize if all four of her numbers were the randomly chosen numbers. The probability of her winning the grand prize given that she won a prize is $\\tfrac{m}{n}$ where $m$ and $n$ are relatively prime positive integers. Find $m+n$.

    \tcbline

\textit{{\textbf{Qwen2.5-Instruct:} }992 tokens, 137.00s}
\vspace{1em}

To solve this problem, we need to calculate the probability of Jen winning the grand prize given that she has won a prize. We will use conditional probability for this purpose.

First, let's define the events:
- \( A \): Jen wins the grand prize.
- \( B \): Jen wins a prize (at least 2 of her numbers match the 4 randomly chosen numbers).

We need to find \( P(A \mid B) \), which is the probability of Jen winning the grand prize given that she has won a prize. By the definition of conditional probability, we have:
\[ P(A \mid B) = \frac{P(A \cap B)}{P(B)}. \]

Since event \( A \) (winning the grand prize) is a subset of event \( B \) (winning a prize), \( A \cap B = A \). Therefore, we have:
\[ P(A \mid B) = \frac{P(A)}{P(B)}. \]

Now, let's calculate \( P(A) \) and \( P(B) \).

Step 1: Calculate \( P(A) \)
The probability of Jen winning the grand prize is the probability that all 4 of her chosen numbers match the 4 randomly chosen numbers. Since there is only one way to choose 4 specific numbers out of 10, and Jen's choice must be exactly these 4 numbers, we have:
\[ P(A) = \frac{1}{\binom{10}{4}} = \frac{1}{210}. \]

Step 2: Calculate \( P(B) \)
The probability of Jen winning a prize means that at least 2 of her chosen numbers match the 4 randomly chosen numbers. We can use complementary counting to find this probability. First, we calculate the probability of the complementary event, which is that Jen wins no prize or wins exactly 1 prize.

Complementary Event 1: No Prize
The number of ways to choose 4 numbers such that none of Jen's 4 numbers are included is:
\[ \binom{6}{4} = 15. \]
So, the probability of this event is:
\[ \frac{\binom{6}{4}}{\binom{10}{4}} = \frac{15}{210} = \frac{1}{14}. \]

Complementary Event 2: Exactly 1 Prize
The number of ways to choose 4 numbers such that exactly 1 of Jen's 4 numbers is included is:
\[ \binom{4}{1} \cdot \binom{6}{3} = 4 \cdot 20 = 80. \]
So, the probability of this event is:
\[ \frac{80}{210} = \frac{8}{21}. \]

Therefore, the probability of the complementary event (no prize or exactly 1 prize) is:
\[ \frac{1}{14} + \frac{8}{21} = \frac{3}{42} + \frac{16}{42} = \frac{19}{42}. \]

Thus, the probability of Jen winning a prize is:
\[ P(B) = 1 - \frac{19}{42} = \frac{23}{42}. \]

 Step 3: Calculate \( P(A \mid B) \)
Now we can find the conditional probability:
\[ P(A \mid B) = \frac{P(A)}{P(B)} = \frac{\frac{1}{210}}{\frac{23}{42}} = \frac{1}{210} \cdot \frac{42}{23} = \frac{42}{4830} = \frac{1}{115.5} = \frac{1}{115.5} = \frac{1}{115.5} = \frac{1}{115.5} = ...
\]
\tcbline

\textit{\textcolor{orange!80}{\textbf{\ours{}:} }422 tokens, 13.53s}
\vspace{1em}

To determine the probability that Jen wins the grand prize given that she won a prize, we need to use conditional probability. Let's define the events as follows:
- \( A \): Jen wins a prize.
- \( B \): Jen wins the grand prize.

We need to find \( P(B \mid A) \), the probability that Jen wins the grand prize given that she won a prize. By the definition of conditional probability, we have:
\[
P(B \mid A) = \frac{P(A \cap B)}{P(A)}
\]

First, we calculate \( P(A) \), the probability that Jen wins a prize. Jen wins a prize if at least two of her numbers match the randomly chosen numbers. We will use the complementary probability approach to find \( P(A) \). This involves calculating the probability that Jen has fewer than 2 matches and subtracting it from 1.

1. **Calculate the total number of ways to choose 4 numbers from \( S \):**
   \[
   \binom{10}{4} = \frac{10!}{4!6!} = 210
   \]

2. **Calculate the number of ways to choose fewer than 2 matches:**
   - **Case 1: 0 matches**
     Jen chooses 4 numbers from the 6 numbers not chosen by the lottery.
     \[
     \binom{6}{4} = \binom{6}{2} = 15
     \]

   - **Case 2: 1 match**
     Jen chooses 1 match from the 4 numbers chosen by the lottery and 3 non-matches from the 6 numbers not chosen.
     \[
     \binom{4}{1} \times \binom{6}{3} = 4 \times 20 = 80
     \]

   Adding these cases together gives the total number of ways to choose fewer than 2 matches:
   \[
   15 + 80 = 95
   \]

   Therefore, the number of ways to choose at least 2 matches is:
   \[
   210 - 95 = 115
   \]

   The probability that Jen wins a prize is:
   \[
   P(A) = \frac{115}{210} = \frac{23}{42}
   \]

3. **Calculate \( P(A \cap B) \), the probability that Jen wins the grand prize:**
   Jen wins the grand prize if all  4 numbers she picks match the 4 numbers chosen by the lottery. There is only 1 way for this this:
   \[
   P(A \cap B) = \frac{1}{210}
   \]

4. **Calculate \( P(B \mid A) \):**
   \[
   P(B \mid A) = \frac{P(A \cap B)}{P(A)} = \frac{\frac{1}{210}}{\frac{23}{42}} = \frac{1}{210} \times \frac{42}{23} = \frac{42}{4830} = \frac{1}{115}
   \]

The probability that Jen wins the grand prize given that she won a prize is \( \frac{1}{115} \). Since \( m = 1 \) and \( n = 115 \), we have \( m + n = 1 + 115 = 116 \).

Thus, the final answer is:
\[
\boxed{116}
\]
\end{mybox}

\begin{mybox}{Coding Question}
    fix the error in this prgram in js 
\begin{lstlisting}
    <p>Write a program to find the largest number among 3 numbers.</p>
    <input type="text" placeholder="Enter 1st number" id="t1">
    <br>
    <input type="text" placeholder="Enter 2nd number" id="t2">
    <br>
    <input type="text" placeholder="Enter 3rd number" id="t3">
    <button onclick="check()">Check</button>
    <h3 id="ans">The largest number is</h3>
    <script>
        function check(){
            let n1 = document.getElementById( "t1" ).value;
            let n2 =document.getElementById("t2").value;
            let n3 = document.getAnimations("t3").value;
            
            if (n1>n2 && n1>n3) {
                document.getElementById( "ans" ).innerHTML ="The largest is "+num1;
            } else if (n2 > n3) {
                document.getElementById( "ans" ).innerHTML ="The largest is " +num2;
            }else{ 
                document.getElementById("ans").innerHTML = "The largest is"  + num3;
        }
    }
    </script>
\end{lstlisting}

    \tcbline

\textit{{\textbf{Qwen2.5-Instruct:} }187 tokens, 11.65s}
\vspace{1em}

There are several issues with your JavaScript code that need to be fixed:

\begin{itemize}
    \item 1. The \texttt{getAnimations} method does not exist in JavaScript. It should be \textbf{\texttt{getElementById}}.
    \item 2. The variables \texttt{num1}, \texttt{num2}, and \texttt{num3} are used instead of \texttt{n1}, \texttt{n2}, and \texttt{n3}.
    \item 3. The comparison logic can be simplified.
\end{itemize}

Here is the corrected version of your code:

\begin{lstlisting}
<p>Write a program to find the largest number among 3 numbers.</p>
<input type="text" placeholder="Enter 1st number" id="t1">
<br>
<input type="text" placeholder="Enter 2nd number" id="t2">
<br>
<input type="text" placeholder="Enter 3rd number" id="t3">
<button onclick="check()">Check</button>
<h3 id="ans">The largest number is</h3>
<script>
    function check() {
        let n1 = parseFloat(document.getElementById("t1").value);
        let n2 = parseFloat(document.getElementById("t2").value);
        let n3 = parseFloat(document.getElementById("t3").value);

        let largest = Math.max(n1, n2, n3);

        document.getElementById("ans").innerHTML = "The largest is " + largest;
    }
</script>
\end{lstlisting}

\textbf{Explanation:}
\begin{enumerate}
    \item \textbf{Corrected Method Name}: Changed \texttt{getAnimations} to \texttt{getElementById}.
    \item \textbf{Variable Names}: Used \texttt{n1}, \texttt{n2}, and \texttt{n3} instead of \texttt{num1}, \texttt{num2}, and \texttt{num3}.
    \item \textbf{Simplified Logic}: Used \textbf{\texttt{Math.max}} to find the largest number among the three inputs.
    \item \textbf{Parsed Input as Float}: Used \textbf{\texttt{parseFloat}} to convert the input values from strings to numbers.
\end{enumerate}

This should work correctly now.

\tcbline

\textit{\textcolor{orange!80}{\textbf{\ours{}:} }195 tokens, 6.75s}
\vspace{1em}

    The error in your JavaScript code is in the line where you're trying to get the value of the third input field. Instead of \texttt{document.getAnimations("t3").value}, you should use \textbf{\texttt{document.getElementById("t3").value}}. Also, you need to declare \texttt{num1}, \texttt{num2}, and \texttt{num3} as variables before using them in the comparison. Here's the corrected code:

\begin{lstlisting}
<p>Write a program to find the largest number among 3 numbers.</p>
<input type="text" placeholder="Enter 1st number" id="t1">
<br>
<input type="text" placeholder="Enter 2nd number" id="t2">
<br>
<input type="text" placeholder="Enter 3rd number" id="t3">
<button onclick="check()">Check</button>
<h3 id="ans">The largest number is</h3>

<script>
    function check() {
        let n1 = document.getElementById("t1").value;
        let n2 = document.getElementById("t2").value;
        let n3 = document.getElementById("t3").value;

        // Convert inputs to numbers
        n1 = Number(n1);
        n2 = Number(n2);
        n3 = Number(n3);

        if (n1 > n2 && n1 > n3) {
            document.getElementById("ans").innerHTML = "The largest is " + n1;
        } else if (n2 > n3) {
            document.getElementById("ans").innerHTML = "The largest is " + n2;
        } else {
            document.getElementById("ans").innerHTML = "The largest is " + n3;
        }
    }
</script>
\end{lstlisting}
Additionally, I added \textbf{\texttt{Number()}} to convert the input values to numbers before comparing them, as the input values are initially strings. This will ensure that the comparison works correctly.
\end{mybox}

\begin{mybox}{Instruction Following Question}
    write an upbeat scifi short story

    \tcbline

\textit{{\textbf{Qwen2.5-Instruct:} }817 tokens, 34.06s}
\vspace{1em}

The Last Frontier

In the year 2145, humanity had spread across the stars, but the final frontier remained: the uncharted planet of Zephyria. It was a world of perpetual twilight, with vast oceans that shimmered like liquid silver under the dim light of its twin suns. The planet's atmosphere was rich in oxygen and nitrogen, making it a prime candidate for terraforming. However, the harsh conditions and unknown dangers had deterred all previous attempts.

Captain Elara Myles stood at the helm of the starship *Aurora*, her eyes fixed on the holographic map of Zephyria. She had been chosen to lead the first expedition to Zephyria, a mission that could either mark the beginning of a new era or end in disaster. Her crew consisted of the best and brightest from Earth’s most advanced space agencies—engineers, scientists, and explorers.

As they approached the planet, the *Aurora* encountered a strange phenomenon. A series of lights began to dance around them, pulsating in a pattern that seemed almost... intelligent. Elara’s heart raced as she ordered the ship to slow down. The lights grew closer, forming a shape that resembled a giant eye, its iris a swirling mass of colors.

“Captain, we’re receiving a signal,” said Lieutenant Kael, the communications officer, his voice tinged with excitement.

Elara nodded, her fingers dancing over the control panel. “Acknowledge the signal and prepare to receive data.”

The ship’s systems processed the incoming information, and a detailed map of Zephyria appeared on the main screen. But there was something else—a message, written in a language that none of them could understand. It was a warning, a plea for help, and a promise of knowledge.

“Captain, we need to land,” said Dr. Lena Chen, the chief scientist, her voice filled with urgency. “This is our chance to make contact with whatever is out there.”

Elara hesitated for a moment, then gave the order. The *Aurora* descended through the clouds, its engines humming softly as it touched down on a vast, open plain. The crew stepped out onto the alien soil, their boots sinking slightly into the soft, nutrient-rich ground.

As they set up their base camp, the lights reappeared, this time forming a structure that looked like a giant dome. The crew gathered around it, their hearts pounding with anticipation.

“Are you ready?” Elara asked, her voice steady despite the thrill of the moment.

“Always, Captain,” replied Kael, his eyes gleaming with determination.

With a collective nod, the team approached the dome. As they reached it, the structure opened, revealing a chamber filled with bioluminescent plants and floating crystals that emitted a soothing light. In the center of the room, a figure materialized—a being of pure energy, its form shifting and changing before their eyes.

“Welcome, travelers,” the entity said, its voice resonating through the chamber. “I am Zephyrion, guardian of this world. You have come at a time of great need. We have faced threats that even your advanced technology cannot comprehend. We seek allies who can help us protect our home.”

Elara stepped forward, her hand outstretched. “We are here to learn and to help. What can we do?”

Zephyrion smiled, its form stabilizing into a humanoid shape. “Together, we can unlock the secrets of Zephyria and perhaps find a way to bridge the gap between our worlds. But first, you must prove your worth.”

With that, Zephyrion led the team through a series of challenges, each one testing their skills and resolve. They learned to harness the planet’s unique energy sources, discovered new forms of life, and even uncovered hidden treasures that could change the course of human history.

By the time the *Aurora* prepared to leave, the crew had formed a deep bond with their new allies. Zephyrion had given them a gift—a device that would allow them to communicate with the planet’s inhabitants and share their knowledge.

As they lifted off, Elara turned to her crew, her face alight with hope. “We’ve just begun. There’s so much more to explore, to learn, and to share.”

Kael grinned, his eyes shining. “And we’ll do it together, Captain.”

The *Aurora* soared into the sky, leaving behind a world that had once seemed unreachable. But now, with the promise of a new alliance and the potential for endless discovery, Zephyria was no longer a distant dream—it was a new beginning.

---

The journey back to Earth was filled with wonder and excitement. The crew shared their experiences with the world, inspiring a new generation of explorers and scientists. And though Zephyria remained a distant world, its spirit lived on in the hearts of those who had answered the call.

For in the vast expanse of the universe, there were always new frontiers to conquer, new mysteries to uncover, and new friends to make. And sometimes, the greatest discoveries were not found in the stars, but in the bonds we forge along the way.
\tcbline

\textit{\textcolor{orange!80}{\textbf{\ours{}:} }338 tokens, 7.02s}
\vspace{1em}

Title: The Quantum Quants and the Quantum Quirk

In the bustling city of Harmonia, where science and art intertwine, a group of quantum physicists, known as the "Quantum Quants," had just made a groundbreaking discovery. Led by the brilliant and enthusiastic Dr. Ada Quantum, their team had developed a new technology that could manipulate quantum entanglement for practical applications.

One day, while conducting an experiment in their quantum computing lab, they stumbled upon an unexpected phenomenon. A minor glitch in their quantum computer caused particles to become entangled in a way that defied all known physics. The team dubbed this peculiar occurrence the "Quantum Quirk."

Excited by the possibilities, the Quantum Quants decided to harness this newfound power for good. They devised a device, the "Quantum Compass," which could teleport small objects across vast distances with unparalleled precision. The implications were staggering - instantaneous communication, global transportation, and even the potential to solve some of the world's most pressing problems.

The Quantum Quants embarked on a mission to share their invention with the world. They traveled to cities across Harmonia, demonstrating the Quantum Compass's capabilities and sparking a wave of optimism and innovation. People were amazed as they watched objects teleport through their fingertips, and the potential applications were endless.

However, the Quantum Quants faced unexpected challenges. Some governments and corporations sought to control the technology for their own gain, while others feared the potential misuse of such advanced power. But the team remained steadfast in their commitment to ethical use. They established a global network of quantum researchers and engineers, ensuring that the Quantum Compass was used responsibly and equitably.

As the Quantum Quants continued to spread their message of unity and progress, the world became a brighter, more connected place. The Quantum Compass brought people together, bridging gaps between nations and fostering a sense of shared humanity. And all thanks to a quantum glitch, the Quantum Quirk, that had once been a minor setback became the catalyst for a new era of scientific exploration and cooperation.
\end{mybox}

\end{document}